\documentclass{article}

\PassOptionsToPackage{numbers,sort&compress}{natbib}
\usepackage[preprint]{neurips_2026}

\usepackage[utf8]{inputenc}
\usepackage[T1]{fontenc}
\usepackage{fontawesome}
\usepackage{hyperref}
\usepackage{url}
\usepackage{amsmath,amssymb,amsfonts}
\usepackage{booktabs}
\usepackage{nicefrac}
\usepackage{graphicx}
\usepackage[table]{xcolor}
\usepackage{cleveref}
\usepackage{subcaption}
\usepackage{multirow}
\usepackage{arydshln}
\usepackage{enumitem}
\usepackage{microtype}
\usepackage{mathtools}
\usepackage{amsthm}
\usepackage{dsfont}
\usepackage{tikz}
\newtheorem{proposition}{Proposition}
\newtheorem{corollary}[proposition]{Corollary}

\newtheorem{remark}{Remark}

\newif\ifdraftmode \draftmodefalse
\ifdraftmode
  
  \newcommand{\todo}[1]{{\color{orange}\textbf{[TODO: #1]}}}
  
\else
  
  \newcommand{\todo}[1]{}
  
\fi

\newcommand{\R}{\mathbb{R}}
\newcommand{\E}{\mathbb{E}}
\newcommand{\x}{\mathbf{x}}
\newcommand{\h}{\mathbf{h}}

\newcommand{\loss}{\mathcal{L}}
\newcommand{\encoder}{f_\theta}
\newcommand{\predictor}{g_\phi}
\newcommand{\target}{\bar{f}_\theta}
\newcommand{\pimarg}{\pi_e}

\title{HEPA: A Self-Supervised Horizon-Conditioned\\Event Predictive Architecture for Time Series}
\author{%
  \textbf{Jonas Petersen}$^{1,2}$\qquad
  Gian-Alessandro Lombardi$^{2}$\qquad
  Riccardo Maggioni$^{2}$\\[2pt]
  \textbf{Camilla Mazzoleni}$^{2}$\qquad
  \textbf{Federico Martelli}$^{1,2}$\qquad
  \textbf{Philipp Petersen}$^{3}$\\[4pt]
  $^{1}$ETH Zurich\quad
  $^{2}$Forgis\quad
  $^{3}$University of Vienna\\[2pt]
  Correspondence to \texttt{jep79@cantab.ac.uk}
}

\begin{document}
\maketitle
\vskip -0.5em
\centerline{\small\faGithub\ \href{https://github.com/Forgis-Labs/HEPA}{\texttt{github.com/Forgis-Labs/HEPA}}}
\vskip 0.5em

\begin{abstract}
  Critical events in multivariate time series, from turbine failures to cardiac arrhythmias, demand accurate prediction, yet labeled data is scarce because such events are rare and costly to annotate. We introduce HEPA (Horizon-conditioned Event Predictive Architecture), built on two key principles. First, a causal Transformer encoder is pretrained via a Joint-Embedding Predictive Architecture (JEPA): a horizon-conditioned predictor learns to forecast future \emph{representations} rather than future values, forcing the encoder to capture predictable temporal dynamics from unlabeled data alone. Second, we freeze the encoder and \emph{finetune only the predictor} toward the target event, producing a monotonic survival cumulative distribution function (CDF) over horizons. With fixed architecture and optimiser hyperparameters across all benchmarks, HEPA handles water contamination, cyberattack detection, volatility regimes, and eight further event types across 11 domains, exceeding leading time-series architectures including PatchTST, iTransformer, MAE, and Chronos-2 on at least 10 of 14 benchmarks, with an order of magnitude fewer tuned parameters and, on lifecycle datasets, an order of magnitude less labeled data.
\end{abstract}

\section{Introduction}
\label{sec:intro}

\begin{figure}[t]
  \centering
  \includegraphics[width=\textwidth]{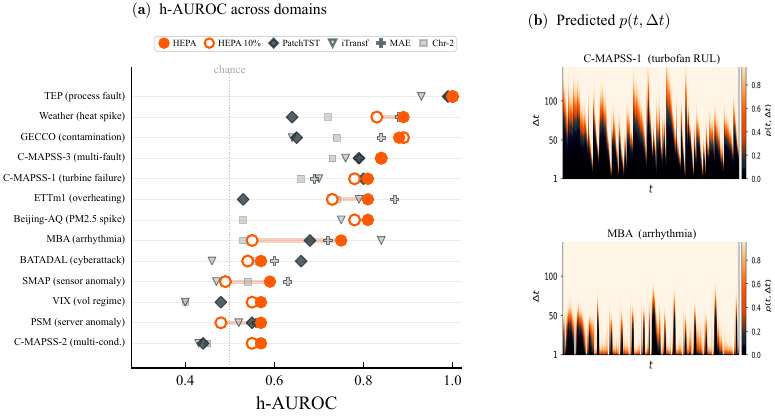}
  \caption{\textbf{One label-efficient architecture, domain- and event-agnostic.} (a)~h-AUROC ($\uparrow$; horizon-averaged AUROC) across 14 benchmarks in 11 domains. HEPA wins on 10 out of 14 at full labels; at 10\% labels (open circles) it retains $\geq$92\% of full-label performance on lifecycle datasets. (b)~Predicted probability surfaces $p(t, \Delta t)$ for turbofan degradation (top) and cardiac arrhythmia (bottom).}
  \label{fig:hero}
\end{figure}

A turbine blade cracks after 12{,}000 flight hours. A bearing degrades over weeks of vibration data. A satellite sensor drifts silently for 48 hours before triggering a cascade. These events are rare in operational data, yet they follow partially predictable precursor dynamics~\citep{scheffer2009earlywarning}: temperatures rise gradually before overheating, vibration amplitudes grow before mechanical failure, and sensor readings deviate systematically before spacecraft faults. A range of machine-learning methods attempt to predict such events from multivariate sensor streams. Remaining-useful-life (RUL) models~\citep{fan2024star} estimate how long until a machine fails; anomaly detectors~\citep{xu2022anomalytransformer,mtsjepa2026} flag when sensor readings look abnormal. Although general-purpose architectures exist for both, the two communities develop separate benchmarks, metrics, and evaluation protocols: RUL models never see anomaly benchmarks; anomaly detectors never forecast time-to-failure. Yet all these tasks share the same structure: given observations up to time $t$, estimate the probability $P(\text{event within } \Delta t)$ for each prediction horizon $\Delta t$.

This structural uniformity suggests a separation of concerns. The \emph{encoder} learns temporal dynamics from unlabeled data without knowing which event matters downstream. The \emph{predictor}, finetuned with a small number of event labels, specialises the learned dynamics to whichever event is relevant. The key design choice is what the encoder should forecast during pretraining. Value-forecasting approaches, whether supervised~\citep{nie2023patchtst} or pretrained on large corpora~\citep{ansari2024chronos,das2024timesfm}, shape representations around all variation in the signal, including noise irrelevant to the downstream event. The Joint-Embedding Predictive Architecture (JEPA)~\citep{assran2023ijepa} offers an alternative: by forecasting future \emph{representations} rather than future values, the encoder learns a latent space that retains what is predictable about the future and discards what is not.

We apply this principle to time series as HEPA (Horizon-conditioned Event Predictive Architecture). A causal Transformer encodes observations up to time $t$; a horizon-conditioned predictor maps the encoding and a horizon $\Delta t$ to a predicted future representation, forcing the encoder to internalise dynamics at multiple timescales (\cref{fig:hero}). After self-supervised pretraining, the standard JEPA recipe discards the predictor and trains a linear probe on the frozen encoder. We instead retain the predictor: freeze the encoder but finetune the predictor alongside a lightweight event head that outputs a discrete-time survival CDF, ensuring that the predicted event probability never decreases as the horizon grows. This ``predictor finetuning'' recipe tunes only 198K parameters, roughly $11{\times}$ fewer than end-to-end training, yet is more expressive than a linear probe because the predictor reshapes its horizon-conditioned outputs to align with the downstream event.

Our contributions are:
\begin{enumerate}[nosep]
  \item \textbf{One architecture, any event, any domain.} A single 2.16\,M-parameter architecture with fixed hyperparameters, evaluated on 14 benchmarks across 11 domains via a unified probability surface $p(t, \Delta t)$. HEPA wins on 10 out of 14 benchmarks while tuning $11{\times}$ fewer parameters than PatchTST.

  \item \textbf{Predictor finetuning as the downstream recipe.} Freezing the encoder and finetuning only the predictor and event head tunes $11{\times}$ fewer parameters than end-to-end training. On the C-MAPSS benchmark~\citep{saxena2008cmapss}, where degradation unfolds over hundreds of cycles, HEPA retains 92\% of full-label h-AUROC at just 2\% of labels. An information-theoretic bound (\cref{prop:retention}) formalises when and why this works, and the bound's key prediction, that lower pretraining loss implies stronger downstream performance, is consistent with the empirical trend across 14 datasets (\cref{fig:theory_validation}).
\end{enumerate}

\section{Related Work}
\label{sec:related}

\paragraph{Self-supervised learning for time series.}
Self-supervised learning (SSL) for time-series representation learning falls into three families. Contrastive methods, including TS2Vec~\citep{yue2022ts2vec}, TNC~\citep{tonekaboni2021tnc}, TimesURL~\citep{liu2024timesurl}, CPC~\citep{oord2018cpc}, and CoST~\citep{woo2022cost}, learn representations by contrasting positive and negative pairs. Masked reconstruction approaches such as PatchTST~\citep{nie2023patchtst}, SimMTM~\citep{dong2023simmtm}, and TimesNet~\citep{wu2023timesnet} recover masked patches in input space. JEPA~\citep{assran2023ijepa,bardes2024vjepa} takes a different path: predicting future \emph{representations} rather than reconstructing inputs, avoiding tying the latent space to value-level fidelity. For time series, TS-JEPA~\citep{tsjepa2024} applies temporal masking for classification, and MTS-JEPA~\citep{mtsjepa2026} adds codebook regularisation for anomaly detection. All these methods discard their pretraining head at inference and probe only the encoder. HEPA instead retains the predictor and finetunes it toward the downstream event, treating the predictor as a learnable bridge between frozen representations and event probabilities. The collapse-prevention mechanism follows the LeJEPA / SIGReg line~\citep{balestriero2025lejepa} rather than the EMA schedule of I-JEPA.

\paragraph{Foundation models for time series.}
Chronos-2~\citep{ansari2024chronos}, TFM-2.5~\citep{das2024timesfm}, MOMENT~\citep{goswami2024moment}, Moirai~\citep{woo2024unified}, and UniTS~\citep{gao2024units} pretrain on large-scale corpora for generic value forecasting. Generative pretraining~\citep{liu2024timer} and LLM repurposing~\citep{zhou2023gpt4ts} offer alternative transfer strategies. These approaches target future channel values; HEPA targets event probabilities. See also concurrent work on industrial pretraining corpora~\citep{othman2025factorynet,merzouki2025factorybench}. The encoder is mid-scale and pretrained per-dataset; what transfers across domains is the \emph{recipe} (architecture + predictor finetuning), not the weights. We benchmark HEPA against four of these foundation models, using identical downstream heads to isolate encoder quality (\cref{sec:experiments,app:chronos_full,app:extra_baselines}).

\paragraph{Prognostics, anomaly prediction, and survival modelling.}
C-MAPSS~\citep{saxena2008cmapss} is the standard remaining-useful-life (RUL) benchmark, where the supervised state of the art is STAR~\citep{fan2024star} (root mean square error, RMSE, 10.61). Self-supervised approaches to RUL prediction remain limited~\citep{ding2022ssl_rul,wang2024masked_rul}. Anomaly detection methods such as Anomaly Transformer~\citep{xu2022anomalytransformer}, DCdetector~\citep{yang2023dcdetector}, and TranAD~\citep{tuli2022tranad} report point-adjusted F1, a metric shown to inflate scores dramatically by crediting entire segments from a single detection~\citep{kim2022rigorous,schmidl2022tsadeval}. These domain-specific metrics are incomparable across tasks. HEPA's downstream parameterisation builds on discrete-time survival models~\citep{lee2018deephit,gensheimer2019nnetsurvival}, which decompose event probability into per-interval hazards composed into a survival CDF; we adapt this to a multi-horizon event prediction setting. We unify evaluation through h-AUROC, the mean of per-horizon AUROC values computed over the probability surface, which is threshold-free and robust to class imbalance (\cref{sec:metrics}). Domain-specific metrics are reported as lossy projections of the same surface for comparability with published baselines.

\section{Method}
\label{sec:method}

\subsection{Architecture and Pretraining}
\label{sec:architecture}

\begin{figure}[t]
  \centering
  \includegraphics[width=\textwidth]{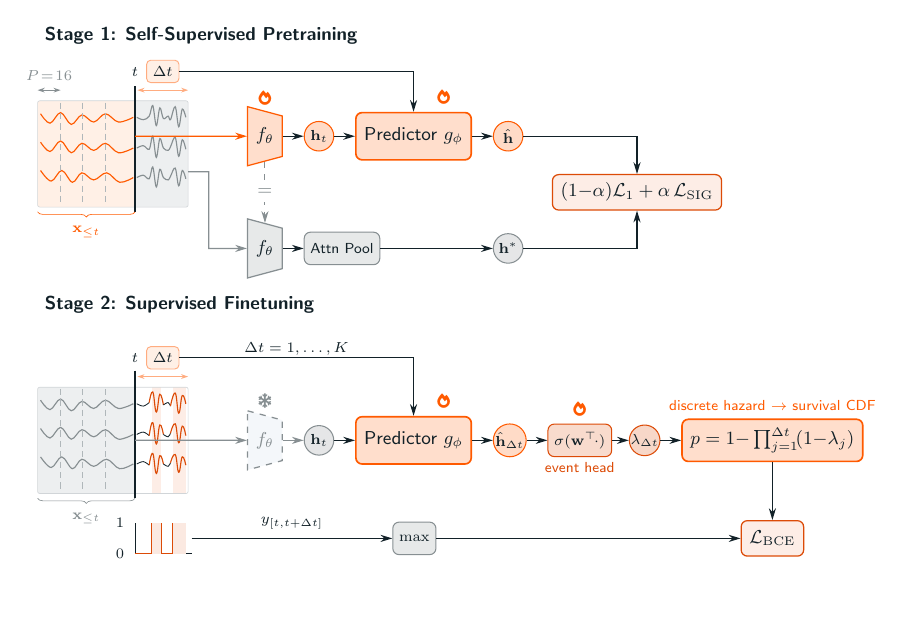}
  \caption{\textbf{HEPA architecture.} Both stages sweep over all $(t, \Delta t)$ pairs per episode. \emph{Stage~1:} The causal encoder $\encoder$ maps $\x_{\leq t}$ to $\h_t$; the predictor $\predictor(\h_t, \Delta t)$ predicts future representations via a self-supervised JEPA objective. \emph{Stage~2:} Encoder frozen; the predictor produces $K$ horizon-specific hazard rates $\lambda_{\Delta t}$ composed into a survival CDF (cumulative distribution function) $p(t, \Delta t)$.}
  \label{fig:architecture}
\end{figure}

HEPA consists of three components that interact across two phases (\cref{fig:architecture}). The \textbf{context encoder} $\encoder$ is a causal Transformer ($d{=}256$, 2 layers, 4 heads) that maps observations $\x_{\leq t}$, tokenised into non-overlapping patches of size $P{=}16$ (following PatchTST~\citep{nie2023patchtst}) with per-context instance normalisation~\citep{kim2022revin} and sinusoidal positional encodings, to a summary embedding $\h_t = \encoder(\x_{\leq t}) \in \R^d$. The \textbf{predictor} $\predictor$ is a 2-layer multilayer perceptron (MLP) that takes the encoder output $\h_t$ together with a prediction horizon $\Delta t$ and produces a predicted embedding of the future interval:
\begin{equation}
  \hat{\h}_{(t,t+\Delta t]} = \predictor(\h_t, \Delta t).
  \label{eq:predictor}
\end{equation}
During pretraining, $\Delta t$ is sampled from a log-uniform distribution over $[1, \Delta t_\text{max}]$, forcing the encoder to internalise dynamics at multiple timescales. The same encoder $\encoder$, applied bidirectionally to $\x_{(t, t+\Delta t]}$ with attention pooling, produces the \textbf{target representation} $\h^*_{(t,t+\Delta t]} \in \R^d$. Both encoders are trained jointly via the optimizer; a SIGReg (Sketched Isotropic Gaussian Regularisation) term $\loss_\mathrm{SIG}$~\citep{balestriero2025lejepa} on the predictor output prevents representation collapse, replacing the exponential moving average (EMA) momentum schedule used in standard JEPA (\cref{app:sigreg_ablation}). SIGReg constrains the predicted representations toward an isotropic Gaussian, which~\citet{balestriero2025lejepa} prove is the optimal embedding distribution for minimising downstream prediction risk in joint-embedding architectures; this eliminates collapse without ad-hoc heuristics. A single mixing weight $\alpha{=}0.1$ controls its contribution to the total loss (\cref{app:sigreg_ablation}).

\paragraph{Relation to canonical JEPA.} HEPA differs from BYOL/I-JEPA/V-JEPA-style joint-embedding predictive architectures in two ways: (a) the target encoder is a weight-shared copy of $\encoder$ rather than an EMA copy or a stop-gradient branch, and (b) collapse is prevented by SIGReg (isotropic Gaussian constraint on the predictor output) rather than by the online/target asymmetry. Trivial collapse $\hat H = H^*=$const is prevented jointly by SIGReg \emph{and} by the asymmetric inputs ($\x_{\leq t}$ for the online branch vs.\ $\x_{(t, t+\Delta t]}$ for the target branch): the predictor never sees the future window directly. This puts HEPA closer to LeJEPA / SIGReg variants~\citep{balestriero2025lejepa} than to the original I-JEPA recipe.

The pretraining loss combines an L1 prediction objective (chosen over L2 because L1 distributes gradient magnitude equally across samples, avoiding domination by outlier predictions) with the SIGReg regulariser:
\begin{equation}
  \loss = (1 - \alpha)\,\|\hat{\h} - \h^*\|_1 + \alpha\,\loss_{\mathrm{SIG}},
  \label{eq:pretrain_loss}
\end{equation}
where $\alpha$ balances the two terms. Because the target encoder shares weights with the online encoder, no stop-gradient is needed; both receive gradients through the optimizer. No labels are used. Pretraining takes under one minute per dataset on a single A10G GPU, with the full 14-dataset, 5-seed sweep completing in under two hours. Per-dataset preprocessing details are in \cref{app:preprocessing}.

\subsection{Downstream: Predictor Finetuning}
\label{sec:downstream}

After pretraining, we freeze the encoder $\encoder$ and finetune only the predictor $\predictor$ together with a lightweight linear event head. This ``predictor finetuning'' (pred-FT) recipe tunes 198K parameters, compared to 2.16M for end-to-end training and 513 for a frozen linear probe. Finetuning reshapes the predictor's per-horizon outputs to separate event-relevant from event-irrelevant dynamics, making it more expressive than a linear probe, while the frozen encoder supplies the pretrained dynamical knowledge that makes few labels sufficient. End-to-end finetuning achieves equivalent h-AUROC at full labels (\cref{tab:finetune_ablation}); pred-FT's advantage is computational efficiency and robustness under label scarcity (\cref{sec:label_efficiency}).

The predictor is run at each of $K$ discrete horizons $\Delta t = 1, \ldots, K$ (unit steps; $K{=}150$ for C-MAPSS/TEP, $K{=}200$ otherwise). A shared linear head maps each predicted representation to a per-interval \emph{conditional hazard}:
\begin{equation}
  \lambda_{\Delta t}(t) \;=\; \sigma\!\bigl(\mathbf{w}^\top \hat{\h}_{(t, t+\Delta t]} + b\bigr) \;\in\; (0, 1),
  \label{eq:hazard}
\end{equation}
where $\sigma$ is the sigmoid function and $\lambda_{\Delta t}(t)$ approximates $P(\text{event in } (\Delta t{-}1, \Delta t] \mid T^* > \Delta t{-}1, \mathbf{x}_{\leq t})$, with $T^*$ denoting the time to the first event after $t$. The event probability surface is then parameterised as a discrete-time survival CDF~\citep{lee2018deephit,gensheimer2019nnetsurvival}:
\begin{equation}
  p(t, \Delta t) \;=\; 1 - \prod_{j=1}^{\Delta t} (1 - \lambda_j(t)).
  \label{eq:event_prob}
\end{equation}
Because each factor $(1 - \lambda_j) \in (0,1)$, the survival product is non-increasing in $\Delta t$, so $p(t, \Delta t)$ increases monotonically with the prediction horizon by construction. No distributional assumptions are required: each $\lambda_{\Delta t}$ is a free function of $\h_t$ via the predictor network. The finetuning loss sums positive-weighted binary cross-entropy (BCE) over horizons:
\begin{equation}
  \loss_\text{FT} = \sum_{\Delta t=1}^{K} w^+ \cdot \text{BCE}\bigl(p(t, \Delta t),\; y(t, \Delta t)\bigr),
  \label{eq:ft_loss}
\end{equation}
where $y(t, \Delta t) = \mathds{1}[\text{event in } (t, t{+}\Delta t]]$ and $w^+ = N_{\text{neg}} / N_{\text{pos}}$ compensates for class imbalance.\footnote{We apply BCE to the cumulative event probability $p(t, \Delta t)$ rather than to the per-step hazards $\lambda_j(t)$ against per-step indicators (the standard discrete-survival likelihood, e.g.\ nnet-survival~\citep{gensheimer2019nnetsurvival}). This is a deliberate design choice: BCE on the cumulative surface acts as a smoothing regulariser across horizons (each hazard $\lambda_j$ contributes to BCE for every $\Delta t \geq j$), which empirically improves h-AUROC under our positive-weighted regime but distorts the probability scale (\cref{app:calibration}).}

\subsection{Theoretical Analysis}
\label{sec:theory}

Predictor finetuning rests on a premise: the pretrained encoder retains enough event-relevant information that a small downstream head can extract it. We formalise when this holds and connect the bound to experiments.

Let $X_{\leq t}$ denote observations up to time $t$, and let $E_{t+\Delta t} \in \{0,1\}$ be a binary indicator that equals 1 if an event occurs in the interval $(t, t{+}\Delta t]$ and 0 otherwise. The encoder produces $H_t = \encoder(X_{\leq t}) \in \R^d$; the target encoder produces $H^* = \target(X_{(t,t+\Delta t]}) \in \R^d$ from the future interval; and the predictor produces $\hat{H} = \predictor(H_t, \Delta t)$. We define the event posterior $\eta(h) \coloneqq P(E_{t+\Delta t}{=}1 \mid H^*{=}h)$ and the marginal event rate $\pimarg \coloneqq P(E_{t+\Delta t}{=}1)$, using $\pimarg$ to distinguish it from the probability surface $p(t, \Delta t)$.

\begin{proposition}[Event-Information Retention]
  \label{prop:retention}
  Suppose \textup{(A1)} the event $E_{t+\Delta t}$ is conditionally independent of $X_{\leq t}$ given $H^*$,
  \textup{(A2)} the pretraining loss satisfies $\E[\|\hat{H} - H^*\|_2^2] \leq \varepsilon$,
  \textup{(A3)} the event posterior $\eta(h)$ is $L$-Lipschitz,
  and \textup{(A4)} the posterior is bounded: $\eta(H^*) \in [\underline{\eta}, \overline{\eta}] \subset (0,1)$ a.s.
  Then
  \begin{equation}
    \label{eq:main_bound}
    I(H_t;\, E_{t+\Delta t}) \;\geq\; I(H^*;\, E_{t+\Delta t}) \;-\; C_\eta \, L^2 \, \varepsilon\;,
  \end{equation}
  where $C_\eta = (2\,\underline{\eta}\,(1{-}\overline{\eta}))^{-1}$ and $I(\cdot;\cdot)$ denotes mutual information.
\end{proposition}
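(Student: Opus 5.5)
The plan is to pass from $H_t$ to the predictor output $\hat H$ by data processing, and then compare $\hat H$ with $H^*$ through the Lipschitz posterior. Since $\hat H = \predictor(H_t,\Delta t)$ is a deterministic function of $H_t$ (for fixed $\Delta t$), the data-processing inequality gives $I(H_t;E_{t+\Delta t}) \geq I(\hat H;E_{t+\Delta t})$. It therefore suffices to show $I(\hat H;E) \geq I(H^*;E) - C_\eta L^2\varepsilon$, where we abbreviate $E = E_{t+\Delta t}$.

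Next I would write mutual information with a binary variable as entropy differences: $I(H^*;E) = H_b(\pimarg) - \E[H_b(\eta(H^*))]$, with $H_b$ the binary entropy. The key device is the ``plug-in'' predictor $\eta(\hat H)$. It is a function of $\hat H$, so the conditional entropy $H(E\mid \hat H)$ is at most the expected cross-entropy (log loss) of $E$ under $\eta(\hat H)$. Making this step rigorous uses (A1): $\hat H$ is a function of $X_{\leq t}$, so $E$ is independent of $\hat H$ given $H^*$, and hence
\[
P(E=1\mid H^*,\hat H) = \eta(H^*).
\]
The gap is then
\[
H(E\mid\hat H) - H(E\mid H^*) \le \E\bigl[\mathrm{KL}\bigl(\mathrm{Ber}(\eta(H^*))\,\|\,\mathrm{Ber}(\eta(\hat H))\bigr)\bigr],
\]
because the expected cross-entropy minus the true conditional entropy is exactly this expected KL. Both mutual informations share $H_b(\pimarg)$, so the same quantity bounds $I(H^*;E) - I(\hat H;E)$.

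The remaining step is to bound the Bernoulli KL quadratically. I would use the reverse-Pinsker/$\chi^2$ bound
\[
\mathrm{KL}(\mathrm{Ber}(a)\,\|\,\mathrm{Ber}(b)) \le \chi^2 = \frac{(a-b)^2}{b(1-b)},
\]
which requires $b=\eta(\hat H)$ to stay away from $0$ and $1$. Combining this with (A3), $(\eta(H^*)-\eta(\hat H))^2 \le L^2\|H^*-\hat H\|_2^2$, and then with (A2), yields $\E[\mathrm{KL}] \le L^2\varepsilon / \min b(1-b)$. With the constant $\tfrac12$ that comes from a second-order Taylor expansion, this matches the stated form $C_\eta L^2\varepsilon$.

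\textbf{Main obstacle.} (A4) only guarantees $\eta(H^*)\in[\underline\eta,\overline\eta]$, not $\eta(\hat H)$. I would first clip: replace the plug-in $\eta(\hat H)$ by its projection onto $[\underline\eta,\overline\eta]$. Projection is a contraction, and $\eta(H^*)$ already lies in the interval, so the squared error does not increase. On the interval, $b(1-b) \ge \underline\eta(1-\overline\eta)$, because each factor is at least its endpoint bound. Using the Taylor form of the KL with the intermediate point also lying in the interval then gives the factor $\tfrac12$, and hence $C_\eta = (2\underline\eta(1-\overline\eta))^{-1}$. If the sharp factor $\tfrac12$ proves awkward, the $\chi^2$ route still gives the bound up to a factor of 2.
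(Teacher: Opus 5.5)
Your proof is correct and gives the same constant $C_\eta$, but the central step differs from the paper's.

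The paper also begins with the data-processing inequality. It then writes both mutual informations as expected KL divergences to the marginal $\mathrm{Ber}(\pimarg)$. It uses (A1) and the tower property to get $P(E=1\mid\hat H)=\E[\eta(H^*)\mid\hat H]$. It bounds the resulting Jensen gap of the convex map $q\mapsto D_{\mathrm{KL}}(\mathrm{Ber}(q)\,\|\,\mathrm{Ber}(\pimarg))$ by $\tfrac12\sup\varphi''\cdot\mathrm{Var}(\eta(H^*)\mid\hat H)$. Finally it controls that variance by the mean-squared error of the unclipped plug-in $\eta(\hat H)$, using that variance is the minimal MSE over $\hat H$-measurable predictors.

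You instead use the log-loss (variational) characterisation of $H(E\mid\hat H)$ with one specific $\hat H$-measurable predictor. The two routes are closely linked. By the KL compensation identity, the paper's Jensen gap equals exactly $\E\bigl[D_{\mathrm{KL}}\bigl(\mathrm{Ber}(\eta(H^*))\,\|\,\mathrm{Ber}(\E[\eta(H^*)\mid\hat H])\bigr)\bigr]$. That is your expected KL with the optimal reference point in place of the clipped plug-in. Both proofs end at $\E[(\eta(H^*)-\eta(\hat H))^2]/(2\underline\eta(1-\overline\eta))$.

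What each route buys:
\begin{itemize}
\item The paper's route needs no clipping, because the conditional mean automatically lies in $[\underline\eta,\overline\eta]$. It also gives an exact expression for the information gap before the quadratic relaxation.
\item Your route is more elementary: Gibbs' inequality plus a Taylor bound, with no Jensen-gap lemma or tower identity. It also shows that any $\hat H$-measurable posterior estimate certifies the retention, which ties the bound directly to downstream log-loss.
\end{itemize}

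One detail matters for the sharp factor $\tfrac12$. You must expand $a\mapsto D_{\mathrm{KL}}(\mathrm{Ber}(a)\,\|\,\mathrm{Ber}(b))$ in its first argument around $a=b$, where the second derivative is $1/(a(1-a))$. Expanding in the second argument instead gives curvature $a/\xi^2+(1-a)/(1-\xi)^2$. That is not bounded by $1/(\underline\eta(1-\overline\eta))$ in general, even with $a$ and $\xi$ in the interval.
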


The proof proceeds in three steps (full details in \cref{app:theory}). First, because $\hat{H}$ is a deterministic function of $H_t$, the data processing inequality gives $I(H_t; E) \geq I(\hat{H}; E)$. Second, a Jensen-gap argument on the convex KL divergence, combined with the Lipschitz condition and prediction error bound, yields $I(H^*; E) - I(\hat{H}; E) \leq C_\eta L^2 \varepsilon$. Combining these two inequalities produces the result.

The bound makes a falsifiable prediction: as pretraining proceeds and $\varepsilon$ shrinks, downstream h-AUROC should rise. The bound's constants $L$ (Lipschitz of $\eta$), $C_\eta$ (posterior bound), and the target sufficiency $I(H^\star; E_{t+\Delta t})$ are functions of the data-generating process: they vary across datasets but are held fixed within a dataset. The bound is therefore directly testable only \emph{within} a dataset, by varying $\varepsilon$ alone. We do this on three contrasting domains, turbofan lifecycle (C-MAPSS-3), cardiac arrhythmia (MBA), and spacecraft telemetry anomalies (SMAP), by snapshotting the encoder during pretraining at epochs $\{1, 3, 8, 25\}$ plus the converged best, and at each snapshot running the standard predictor finetuning recipe to obtain h-AUROC on the held-out test split (3 seeds per dataset). The bound's monotone prediction holds across all three: pooled Spearman $\rho(\varepsilon, \text{h-AUROC}) = -0.67$ ($p{=}0.017$, $n{=}12$) on C-MAPSS-3, $\rho{=}{-}0.64$ ($p{=}0.026$, $n{=}12$) on MBA, and $\rho{=}{-}0.49$ ($p{=}0.13$, $n{=}11$) on SMAP. SMAP shows the largest visible h-AUROC range (0.40 at $\varepsilon{=}0.033$ rising to 0.65 at $\varepsilon{=}0.026$). The converged-best snapshot regresses slightly relative to epoch~25 on all three datasets, consistent with mild over-pretraining at fixed labels. C-MAPSS-1 (the original lifecycle benchmark, $\rho{=}{-}0.87$, $p{<}0.001$) gives an even stronger signal and is reported in \cref{app:theory}. \Cref{cor:precursor} predicts a fourth regime where the bound becomes vacuous: on short-window anomaly benchmarks like GECCO we observe a within-dataset $\rho{=}{+}0.14$ ($p{=}0.67$) with finetuning instability across early snapshots, exactly as expected when extended precursors are weak (also in \cref{app:theory}).

\begin{figure}[t]
  \centering
  \includegraphics[width=0.49\columnwidth]{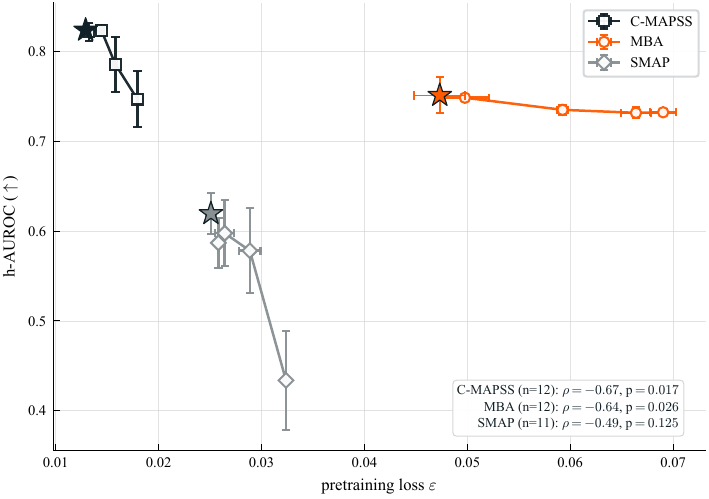}\hfill
  \includegraphics[width=0.49\columnwidth]{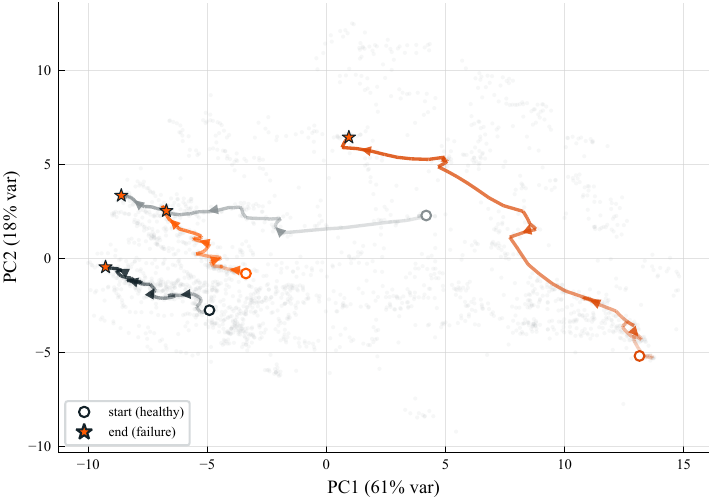}
  \caption{\textbf{Self-supervised pretraining learns task-relevant structure.} (a)~Pretraining loss $\varepsilon$ vs.\ downstream h-AUROC ($\uparrow$) at fixed checkpoints across three domains (C-MAPSS-3: $\rho{=}{-}0.67$; MBA: $\rho{=}{-}0.64$; SMAP: $\rho{=}{-}0.49$; 3 seeds, error bars $\pm 1$\,std). Within a dataset, $L$, $C_\eta$, and $I(H^\star; E_{t+\Delta t})$ are constant, so the bound's monotone prediction is directly testable. $\bigstar$~marks the converged-best snapshot; $\varepsilon$ scales differ across datasets so curves cannot be compared horizontally. (b)~Principal component analysis (PCA) of pretrained C-MAPSS-1 representations for four test engines. Open circles: first observation (healthy); stars: last observation (near failure). PC1 captures 61\% of variance; the encoder organises representations into a smooth degradation manifold without any labels.}
  \label{fig:pretrain_validation}\label{fig:theory_validation}\label{fig:pca}
\end{figure}

A cross-dataset scatter, by contrast, does \emph{not} validate the bound: pooling the converged $\varepsilon$ across the 14 Table~\ref{tab:benchmark} datasets gives Pearson $r{=}{-}0.05$ ($p{=}0.90$), because $L$, $C_\eta$, and the absolute scale of the target representation differ by dataset, dominating any signal from $\varepsilon$ alone; the same incommensurability \cref{fig:theory_validation} makes visible (C-MAPSS-3 clusters around $\varepsilon{\sim}0.015$, SMAP around $0.027$, MBA around $0.06$). This does not contradict the bound; it shows that comparing $\varepsilon$ across datasets compares incommensurable quantities, motivating the within-dataset protocol above. Two further caveats remain. The constants $L$ and $C_\eta$ are not estimated directly; \cref{fig:theory_validation} validates only the monotonic relationship, not the full quantitative bound. And A1 (target sufficiency) may fail when event precursors span intervals longer than the target window; when A1 is violated, the bound becomes loose in a \emph{favourable} direction (see \cref{app:theory} for assumption-by-assumption failure modes).

\begin{corollary}[Precursor necessity]
  \label{cor:precursor}
  The bound is non-vacuous if and only if the future interval contains event precursors that the target encoder captures ($I(H^*; E_{t+\Delta t}) > 0$) and the predictor approximates the target well enough ($\varepsilon < I(H^*; E_{t+\Delta t}) / (C_\eta L^2)$).
\end{corollary}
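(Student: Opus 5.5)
The plan is to read the corollary as a direct unpacking of \cref{prop:retention}. I take ``non-vacuous'' to mean that the right-hand side of \eqref{eq:main_bound} is strictly positive, so that it certifies $I(H_t;E_{t+\Delta t})>0$. Since mutual information is always nonnegative, any lower bound that is $\le 0$ carries no information. Both directions of the equivalence therefore reduce to analysing the sign of
\[
R \;\coloneqq\; I(H^*;E_{t+\Delta t}) - C_\eta L^2\varepsilon ,
\]
under assumptions (A1)--(A4). Throughout, $C_\eta>0$ by (A4) and $L\ge 0$, $\varepsilon\ge 0$.

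\emph{Sufficiency.} Assume $I(H^*;E_{t+\Delta t})>0$ and $\varepsilon < I(H^*;E_{t+\Delta t})/(C_\eta L^2)$. Multiplying the second inequality by the positive constant $C_\eta L^2$ gives $R>0$. \Cref{prop:retention} then yields $I(H_t;E_{t+\Delta t})\ge R>0$, so the bound is non-vacuous.

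\emph{Necessity.} Assume $R>0$. Because $C_\eta L^2\varepsilon\ge 0$, we get $I(H^*;E_{t+\Delta t})\ge R+C_\eta L^2\varepsilon>0$, which is the precursor condition. If $L>0$, rearranging $R>0$ gives exactly $\varepsilon<I(H^*;E_{t+\Delta t})/(C_\eta L^2)$. I would also read the first condition interpretively: $I(H^*;E)>0$ means the target representation of the future window is not independent of the event, i.e.\ the window carries precursors that $\target$ encodes.

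The only delicate step is the degenerate case $L=0$. Then $\eta$ is constant, so $E$ is independent of $H^*$ and $I(H^*;E)=0$, and the bound is vacuous regardless of $\varepsilon$. I will handle this by adopting the convention $x/0=+\infty$ for $x>0$. Under that convention the case $L=0$ is consistent with the equivalence: its left side fails (the bound is vacuous) and its right side fails (the first condition is false). I will state this convention explicitly so that the ``if and only if'' holds literally, and I will note that the corollary concerns non-vacuity of the \emph{bound}, not of $I(H_t;E)$ itself.
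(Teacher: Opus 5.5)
Your proposal is correct and follows the paper's own argument: the tightness analysis in \cref{app:theory} sets the right-hand side of \eqref{eq:main_bound} to zero to get the vacuity threshold $\varepsilon_{\mathrm{vac}} = I(H^*;E_{t+\Delta t})/(C_\eta L^2)$, and your sign analysis of $R$ makes that reasoning explicit in both directions. One small remark is that the convention $x/0=+\infty$ is never actually used: $L=0$ forces $I(H^*;E_{t+\Delta t})=0$, so the quotient would be $0/0$, but the first conjunct already fails and the equivalence holds without any convention.
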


This corollary explains both HEPA's successes and its failures. On C-MAPSS, degradation unfolds over hundreds of cycles, so $I(H^*; E_{t+\Delta t})$ is large and pretraining drives $\varepsilon$ small, yielding h-AUROC $\geq 0.81$. On datasets without extended precursors, the bound is vacuous regardless of pretraining quality.

\section{Evaluation Framework}
\label{sec:metrics}

The model outputs a probability surface $p(t, \Delta t)$ (\cref{eq:event_prob}) for each observation time $t$ and prediction horizon $\Delta t$. This surface is the complete prediction; every metric is computed deterministically from it (\cref{fig:auroc_curve}), enabling direct comparison with published baselines without retraining.

\begin{figure}[t]
  \centering
  \includegraphics[width=\columnwidth]{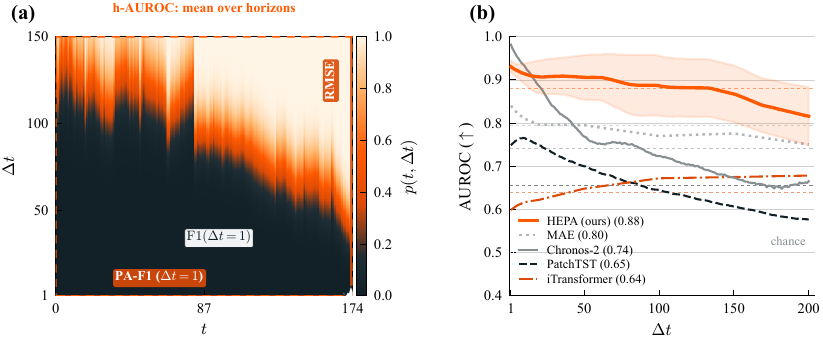}
  \caption{\textbf{Evaluation framework.} (a)~The probability surface $p(t, \Delta t)$ on a representative C-MAPSS-1 engine (lifetime 174 cycles) unifies all event-prediction metrics as lossy projections. The colour scale matches Fig.~\ref{fig:hero}b. RMSE requires converting the survival curve to a point estimate $\hat{\tau} = \sum_{\Delta t} \Delta t \cdot P(\text{event at } \Delta t)$; this projection is sensitive to calibration (\cref{app:legacy_metrics}). PA-F1 thresholds $p(t, 1)$ at the smallest horizon and credits entire anomaly segments from a single detection (inflated~\citep{kim2022rigorous}). F1 collapses to a single $(t, \Delta t)$ cell. h-AUROC averages AUROC over all horizons, using the full surface. (b)~Per-horizon AUROC on GECCO ($K{=}200$ for HEPA / PatchTST / Chronos-2; sparse $K{=}8$ for iTransformer / MAE following the v34 protocol). Mean h-AUROC ($\uparrow$) per method shown in the legend; dashed lines mark the per-method mean. HEPA holds AUROC ${\geq} 0.82$ across the full horizon range while value-level baselines decay sharply.}
  \label{fig:auroc_curve}
\end{figure}

As a cross-domain metric, we use \textbf{h-AUROC}: the mean of per-horizon AUROC values pooled over $(t, \Delta t)$ cells. Per-horizon prevalence varies wildly across datasets, and even within a single surface: on C-MAPSS-1, the event ``failure within $\Delta t$ steps'' has prevalence 0.5\% at $\Delta t{=}1$ and 96\% at $\Delta t{=}150$, a ${\sim}200{\times}$ range. Pooled area under the precision-recall curve (AUPRC) over all $(t, \Delta t)$ cells inherits a 0.957 baseline on C-MAPSS-1, because a model predicting only per-horizon prevalence already scores there. h-AUROC solves this by decomposing the surface into independent per-horizon binary classification problems, each with a universal 0.5 baseline that does not depend on prevalence. The uniform average treats all horizons equally; in practice, specific horizons matter more (long-range for turbine maintenance, short-range for arrhythmia). We use the uniform average for cross-domain comparability; the full surface is always stored for application-specific weighting. Domain-specific metrics (RMSE for remaining-useful-life, PA-F1 for anomaly detection) are derived as projections of the same surface for comparability with published baselines (\cref{app:legacy_metrics}). All numbers are reported as mean $\pm$ std across 5 seeds (HEPA, PatchTST, iTransformer, MAE) or 3 seeds (Chronos-2).

\section{Experiments}
\label{sec:experiments}

\subsection{Setup}
\label{sec:setup}

We pretrain a separate HEPA encoder per dataset from unlabeled training data. Architecture and hyperparameters are identical across all domains; only the input projection (sensor count $S$) changes. All comparison methods share the same 198K-param downstream MLP head, positive-weighted BCE loss, and evaluation protocol; only the frozen encoder differs. Dense unit-step horizons are used throughout: $K{=}150$ for C-MAPSS and TEP, $K{=}200$ for all others. The dataset overview (14 datasets, 11 domains) is in \cref{tab:tokenization}.

\subsection{Main Results}
\label{sec:benchmark}

\begin{table*}[t]
  \centering
  \caption{\textbf{Main results} (mean\,{\scriptsize$\pm$}\,std; 5 seeds for HEPA, PatchTST, iTransformer, MAE; 3 seeds for Chronos-2). All methods use matched-capacity downstream heads on frozen encoders (\cref{app:baseline_protocol}). Each dataset has two rows: 100\% labels and 10\% labels (gray). \textbf{Bold}\,=\,best mean per row.}
  \label{tab:benchmark}
  \scriptsize
  \setlength{\tabcolsep}{1.5pt}
  \begin{tabular*}{\textwidth}{@{\extracolsep{\fill}}llccccccccccc@{}}
    \toprule
    & & & {\scriptsize FM} & \multicolumn{4}{c}{\scriptsize Unified architecture} & \multicolumn{4}{c}{\scriptsize Domain-specific SOTA metric} \\
    \cmidrule(lr){4-4}\cmidrule(lr){5-8}\cmidrule(l){9-12}
    Dataset & Domain & Label\,\% & Chr-2 & PatchTST & iTransf & MAE & \textbf{HEPA} & Metric & \textbf{HEPA} & SOTA & Ref \\
    & & & \multicolumn{5}{c}{{\tiny h-AUROC $\uparrow$}} & \multicolumn{4}{c}{{\tiny domain metric}} \\
    \midrule
    \multirow{2}{*}{\shortstack[l]{C-MAPSS-1\\[-1pt]{\color{gray}\scriptsize turbine failure}}} & \multirow{2}{*}{Turbo.}
    & 100 & $.66${\tiny$\pm.00$} & $.80${\tiny$\pm.04$} & $.70${\tiny$\pm.05$} & $.69${\tiny$\pm.02$} & $\mathbf{.81}${\tiny$\pm.03$} & RMSE$\downarrow$ & $28.5$ & $\mathbf{12.2}$ & {\tiny\citep{fan2024star}} \\
    & & \tiny\color{gray}10 & {\color{gray}\scriptsize $.66${\tiny$\pm.00$}} & {\color{gray}\scriptsize $.69${\tiny$\pm.08$}} & {\color{gray}\scriptsize $.59${\tiny$\pm.04$}} & {\color{gray}\scriptsize $.63${\tiny$\pm.12$}} & {\color{gray}\scriptsize $\mathbf{.78}${\tiny$\pm.07$}} & & {\color{gray}\scriptsize $32.3$} & {\color{gray}\scriptsize $\mathbf{17.0}$} & \\
    \multirow{2}{*}{\shortstack[l]{C-MAPSS-2\\[-1pt]{\color{gray}\scriptsize multi-cond.}}} & \multirow{2}{*}{Turbo.}
    & 100 & $.45${\tiny$\pm.01$} & $.44${\tiny$\pm.03$} & $.43${\tiny$\pm.03$} & $.56${\tiny$\pm.01$} & $\mathbf{.57}${\tiny$\pm.01$} & RMSE$\downarrow$ & $40.8$ & $\mathbf{20.0}$ & {\tiny\citep{fan2024star}} \\
    & & \tiny\color{gray}10 & {\color{gray}\scriptsize $.46${\tiny$\pm.02$}} & {\color{gray}\scriptsize $.48${\tiny$\pm.08$}} & {\color{gray}\scriptsize $.50${\tiny$\pm.08$}} & {\color{gray}\scriptsize $.52${\tiny$\pm.01$}} & {\color{gray}\scriptsize $\mathbf{.55}${\tiny$\pm.01$}} & & {\color{gray}\scriptsize $\mathbf{40.9}$} & {\color{gray}\scriptsize $43.1$} & \\
    \multirow{2}{*}{\shortstack[l]{C-MAPSS-3\\[-1pt]{\color{gray}\scriptsize multi-fault}}} & \multirow{2}{*}{Turbo.}
    & 100 & $.73${\tiny$\pm.00$} & $.79${\tiny$\pm.01$} & $.76${\tiny$\pm.01$} & $.78${\tiny$\pm.02$} & $\mathbf{.84}${\tiny$\pm.01$} & RMSE$\downarrow$ & $34.7$ & $\mathbf{12.7}$ & {\tiny\citep{fan2024star}} \\
    & & \tiny\color{gray}10 & {\color{gray}\scriptsize $.67${\tiny$\pm.00$}} & {\color{gray}\scriptsize $.77${\tiny$\pm.03$}} & {\color{gray}\scriptsize $.64${\tiny$\pm.06$}} & {\color{gray}\scriptsize $.78${\tiny$\pm.03$}} & {\color{gray}\scriptsize $\mathbf{.84}${\tiny$\pm.01$}} & & {\color{gray}\scriptsize $47.1$} & {\color{gray}\scriptsize $\mathbf{21.8}$} & \\
    \multirow{2}{*}{\shortstack[l]{C-MAPSS-4\\[-1pt]{\color{gray}\scriptsize multi-cond.+fault}}} & \multirow{2}{*}{Turbo.}
    & 100 & --- & $.52${\tiny$\pm.03$} & $.45${\tiny$\pm.02$} & $.57${\tiny$\pm.02$} & $\mathbf{.63}${\tiny$\pm.02$} & RMSE$\downarrow$ & --- & --- & \\
    & & \tiny\color{gray}10 & {\color{gray}\scriptsize ---} & {\color{gray}\scriptsize $.51${\tiny$\pm.04$}} & {\color{gray}\scriptsize $.49${\tiny$\pm.04$}} & {\color{gray}\scriptsize $.52${\tiny$\pm.04$}} & {\color{gray}\scriptsize $\mathbf{.55}${\tiny$\pm.03$}} & & {\color{gray}\scriptsize ---} & {\color{gray}\scriptsize ---} & \\
    \midrule
    \multirow{2}{*}{\shortstack[l]{SMAP\\[-1pt]{\color{gray}\scriptsize sensor anomaly}}} & \multirow{2}{*}{Spacecraft}
    & 100 & $.54${\tiny$\pm.02$} & $.49${\tiny$\pm.03$} & $.47${\tiny$\pm.05$} & $\mathbf{.64}${\tiny$\pm.04$} & $.59${\tiny$\pm.05$} & PA-F1$\uparrow$ & $.94$ & $\mathbf{.96}$ & {\tiny\citep{xu2022anomalytransformer}} \\
    & & \tiny\color{gray}10 & {\color{gray}\scriptsize $.51${\tiny$\pm.01$}} & {\color{gray}\scriptsize $.46${\tiny$\pm.05$}} & {\color{gray}\scriptsize $\mathbf{.52}${\tiny$\pm.07$}} & {\color{gray}\scriptsize $.50${\tiny$\pm.15$}} & {\color{gray}\scriptsize $.49${\tiny$\pm.08$}} & & {\color{gray}\scriptsize $.92$} & {\color{gray}\scriptsize $\mathbf{.96}$} & \\
    \multirow{2}{*}{\shortstack[l]{PSM\\[-1pt]{\color{gray}\scriptsize server anomaly}}} & \multirow{2}{*}{Server}
    & 100 & $.48${\tiny$\pm.01$} & $.55${\tiny$\pm.02$} & $.52${\tiny$\pm.03$} & $.56${\tiny$\pm.02$} & $\mathbf{.57}${\tiny$\pm.02$} & PA-F1$\uparrow$ & $.94$ & $\mathbf{.98}$ & {\tiny\citep{xu2022anomalytransformer}} \\
    & & \tiny\color{gray}10 & {\color{gray}\scriptsize $.51${\tiny$\pm.01$}} & {\color{gray}\scriptsize $.43${\tiny$\pm.03$}} & {\color{gray}\scriptsize $\mathbf{.53}${\tiny$\pm.01$}} & {\color{gray}\scriptsize $.52${\tiny$\pm.02$}} & {\color{gray}\scriptsize $.48${\tiny$\pm.04$}} & & {\color{gray}\scriptsize $.95$} & {\color{gray}\scriptsize $\mathbf{.98}$} & \\
    \multirow{2}{*}{\shortstack[l]{MBA\\[-1pt]{\color{gray}\scriptsize arrhythmia}}} & \multirow{2}{*}{Cardiac}
    & 100 & $.53${\tiny$\pm.10$} & $.68${\tiny$\pm.07$} & $\mathbf{.84}${\tiny$\pm.03$} & $.73${\tiny$\pm.03$} & $.75${\tiny$\pm.03$} & F1$\uparrow$ & $\mathbf{.98}$ & $.77$ & {\tiny LSTM-AE} \\
    & & \tiny\color{gray}10 & {\color{gray}\scriptsize $.50${\tiny$\pm.07$}} & {\color{gray}\scriptsize $\mathbf{.65}${\tiny$\pm.08$}} & {\color{gray}\scriptsize $.31${\tiny$\pm.05$}} & {\color{gray}\scriptsize $.53${\tiny$\pm.04$}} & {\color{gray}\scriptsize $.55${\tiny$\pm.15$}} & & {\color{gray}\scriptsize $\mathbf{.98}$} & {\color{gray}\scriptsize $.81$} & \\
    \multirow{2}{*}{\shortstack[l]{BATADAL\\[-1pt]{\color{gray}\scriptsize cyberattack}}} & \multirow{2}{*}{ICS}
    & 100 & $.56${\tiny$\pm.01$} & $\mathbf{.66}${\tiny$\pm.03$} & $.46${\tiny$\pm.13$} & $.60${\tiny$\pm.04$} & $.57${\tiny$\pm.03$} & F1$\uparrow$ & $\mathbf{.77}$ & $.74$ & {\tiny AEED} \\
    & & \tiny\color{gray}10 & {\color{gray}\scriptsize $.55${\tiny$\pm.04$}} & {\color{gray}\scriptsize $.37${\tiny$\pm.04$}} & {\color{gray}\scriptsize $.54${\tiny$\pm.08$}} & {\color{gray}\scriptsize $\mathbf{.61}${\tiny$\pm.08$}} & {\color{gray}\scriptsize $.54${\tiny$\pm.06$}} & & {\color{gray}\scriptsize $\mathbf{.61}$} & {\color{gray}\scriptsize $.33$} & \\
    \multirow{2}{*}{\shortstack[l]{TEP\\[-1pt]{\color{gray}\scriptsize process fault}}} & \multirow{2}{*}{Chemical}
    & 100 & --- & $.99${\tiny$\pm.02$} & $.93${\tiny$\pm.02$} & $.96${\tiny$\pm.02$} & $\mathbf{1.00}${\tiny$\pm.00$} & F1$\uparrow$ & $\mathbf{.95}$ & $.93$ & {\tiny XGBoost} \\
    & & \tiny\color{gray}10 & {\color{gray}\scriptsize ---} & {\color{gray}\scriptsize $\mathbf{1.00}${\tiny$\pm.00$}} & {\color{gray}\scriptsize $.94${\tiny$\pm.05$}} & {\color{gray}\scriptsize $.99${\tiny$\pm.00$}} & {\color{gray}\scriptsize $\mathbf{1.00}${\tiny$\pm.00$}} & & {\color{gray}\scriptsize $.88${\tiny$\pm.03$}} & {\color{gray}\scriptsize $.86$} & \\
    \midrule
    \multirow{2}{*}{\shortstack[l]{ETTm1\\[-1pt]{\color{gray}\scriptsize overheating}}} & \multirow{2}{*}{Power}
    & 100 & $.74${\tiny$\pm.01$} & $.53${\tiny$\pm.03$} & $.79${\tiny$\pm.03$} & $\mathbf{.87}${\tiny$\pm.00$} & $.81${\tiny$\pm.00$} & & & & \\
    & & \tiny\color{gray}10 & {\color{gray}\scriptsize $.68${\tiny$\pm.02$}} & {\color{gray}\scriptsize $.50${\tiny$\pm.03$}} & {\color{gray}\scriptsize $.61${\tiny$\pm.02$}} & {\color{gray}\scriptsize $\mathbf{.77}${\tiny$\pm.02$}} & {\color{gray}\scriptsize $.73${\tiny$\pm.01$}} & & & & \\
    \multirow{2}{*}{\shortstack[l]{Weather\\[-1pt]{\color{gray}\scriptsize heat spike}}} & \multirow{2}{*}{Climate}
    & 100 & $.72${\tiny$\pm.02$} & $.64${\tiny$\pm.08$} & $.83${\tiny$\pm.04$} & $.88${\tiny$\pm.02$} & $\mathbf{.89}${\tiny$\pm.01$} & & & & \\
    & & \tiny\color{gray}10 & {\color{gray}\scriptsize $.71${\tiny$\pm.02$}} & {\color{gray}\scriptsize $.67${\tiny$\pm.03$}} & {\color{gray}\scriptsize $\mathbf{.85}${\tiny$\pm.01$}} & {\color{gray}\scriptsize $.83${\tiny$\pm.02$}} & {\color{gray}\scriptsize $.83${\tiny$\pm.02$}} & & & & \\
    \multirow{2}{*}{\shortstack[l]{Beijing-AQ\\[-1pt]{\color{gray}\scriptsize PM2.5 spike}}} & \multirow{2}{*}{Air}
    & 100 & $.53${\tiny$\pm.00$} & $\mathbf{.81}${\tiny$\pm.01$}$^{\dagger}$ & $.75${\tiny$\pm.02$} & $\mathbf{.81}${\tiny$\pm.01$} & $\mathbf{.81}${\tiny$\pm.01$} & & & & \\
    & & \tiny\color{gray}10 & {\color{gray}\scriptsize $.49${\tiny$\pm.00$}} & {\color{gray}\scriptsize $.65${\tiny$\pm.08$}} & {\color{gray}\scriptsize $.73${\tiny$\pm.02$}} & {\color{gray}\scriptsize $.77${\tiny$\pm.01$}} & {\color{gray}\scriptsize $\mathbf{.78}${\tiny$\pm.02$}} & & & & \\
    \multirow{2}{*}{\shortstack[l]{VIX\\[-1pt]{\color{gray}\scriptsize vol regime}}} & \multirow{2}{*}{Finance}
    & 100 & $.40${\tiny$\pm.04$} & $.48${\tiny$\pm.11$} & $.40${\tiny$\pm.14$} & $.57${\tiny$\pm.03$} & $\mathbf{.57}${\tiny$\pm.01$} & & & & \\
    & & \tiny\color{gray}10 & {\color{gray}\scriptsize $.38${\tiny$\pm.04$}} & {\color{gray}\scriptsize $\mathbf{.57}${\tiny$\pm.13$}} & {\color{gray}\scriptsize $.48${\tiny$\pm.05$}} & {\color{gray}\scriptsize $.55${\tiny$\pm.03$}} & {\color{gray}\scriptsize $.55${\tiny$\pm.01$}} & & & & \\
    \multirow{2}{*}{\shortstack[l]{GECCO\\[-1pt]{\color{gray}\scriptsize contamination}}} & \multirow{2}{*}{Water}
    & 100 & $.74${\tiny$\pm.02$} & $.65${\tiny$\pm.07$} & $.64${\tiny$\pm.15$} & $.81${\tiny$\pm.04$} & $\mathbf{.88}${\tiny$\pm.06$} & & & & \\
    & & \tiny\color{gray}10 & {\color{gray}\scriptsize $\mathbf{.80}${\tiny$\pm.03$}} & {\color{gray}\scriptsize $.52${\tiny$\pm.11$}} & {\color{gray}\scriptsize $.50${\tiny$\pm.13$}} & {\color{gray}\scriptsize $.55${\tiny$\pm.11$}} & {\color{gray}\scriptsize $.39${\tiny$\pm.13$}} & & & & \\
    \cmidrule{1-8}
    \multicolumn{3}{@{}l}{\textit{Best (100\%)}} & 0 & 2 & 1 & 4 & 10 & & & & \\
    \cmidrule{1-8}
  \end{tabular*}
  \vspace{2pt}
  \begin{minipage}{\textwidth}
    \scriptsize
    Matched downstream heads (HEPA 198K pred-FT; baselines 264K dt-MLP), positive-weighted BCE, identical protocol. $K{=}150$ for C-MAPSS/TEP, $K{=}200$ otherwise. Domain SOTAs are detection or RUL baselines; HEPA domain metrics projected from $p(t,\Delta t)$ at the matching horizon ($\Delta t{=}1$ for PA-F1/F1, $\mathbb{E}[\Delta t]$ for RMSE; \cref{app:legacy_metrics}). PA-F1 = point-adjusted F1~\citep{kim2022rigorous}.
    Pairwise Welch's $t$-tests in \cref{app:significance}.
    $^{\dagger}$Beijing-AQ PatchTST: 3 stations at 100\%. Bottom block has no published domain SOTA. Additional baselines (MOMENT, TFM-2.5, Moirai, MTS-JEPA) in \cref{app:chronos_full,app:mtsjepa_full}.
  \end{minipage}
\end{table*}

\Cref{tab:benchmark} compares HEPA against two classes of methods. The primary comparison is \emph{architectural}: PatchTST~\citep{nie2023patchtst}, iTransformer~\citep{liu2024itransformer}, and a masked autoencoder (MAE) baseline use the same per-dataset regime with identical downstream heads, isolating the effect of JEPA pretraining versus alternative self-supervised and supervised objectives. The secondary comparison is against the \emph{foundation model} Chronos-2~\citep{ansari2024chronos}, which pretrains on a large external corpus and operates in a fundamentally different regime. A full comparison against MTS-JEPA~\citep{mtsjepa2026} (matched protocol) is in \cref{app:mtsjepa_full}; HEPA wins on 8 out of 9 datasets where MTS-JEPA could be reproduced (TEP excluded: the public MTS-JEPA release does not include a chemical-process benchmark).

\paragraph{HEPA vs.\ architectural baselines.} HEPA wins on 10 out of 14 benchmarks at 100\% labels, including all four C-MAPSS variants and the newly added FD004 (the hardest subset: six fault modes, six operating conditions). HEPA's representation-level prediction captures temporal structure that supervised training (PatchTST) and reconstruction-based SSL (MAE) miss, particularly on datasets with extended precursor dynamics (C-MAPSS, GECCO, PSM, TEP). MAE is a strong second: it matches or exceeds HEPA on spacecraft telemetry (SMAP) and power systems (ETTm1), suggesting that reconstruction-based pretraining transfers well when the dominant failure mode is gradual drift. iTransformer's variate-attention mechanism excels on MBA (h-AUROC 0.84 vs.\ HEPA's 0.75), where arrhythmia patterns are localised across specific leads.

\paragraph{HEPA vs.\ Chronos-2.} HEPA matches or exceeds Chronos-2 on most benchmarks. Per-dataset JEPA excels when events have extended precursors that the local training data fully represents; large-corpus pretraining helps when event signatures resemble patterns seen at scale.

\paragraph{Honest losses.} HEPA is below the best baseline on four datasets at 100\% labels. The pattern is interpretable: BATADAL and MBA have sensor-localised events where channel-fusion tokenisation dilutes the relevant subset, so per-variate attention (iTransformer) or channel-independent training (PatchTST) wins; MAE's reconstruction objective transfers well when the dominant failure mode is gradual drift (SMAP, ETTm1). Adopting a sensor-as-token strategy~\citep{liu2024itransformer} within the HEPA encoder is a natural way to close this gap.

\subsection{What Does Pretraining Learn?}
\label{sec:latent_viz}

\Cref{fig:pca} visualises encoder representations after self-supervised pretraining on C-MAPSS-1. Without any labels, the encoder organises representations into a smooth degradation manifold: PC1 alone captures 61\% of variance and tracks time-to-failure monotonically within each engine (median per-engine Spearman ${\rho}{=}{+}0.97$, $84\%$ of engines ${\rho}{>}0.9$). Engines starting from different healthy regions converge toward a shared failure region. This structure explains why so few labels suffice: the encoder has already separated healthy from degraded states.

\subsection{Label Efficiency}
\label{sec:label_efficiency}

All methods in \cref{tab:benchmark} freeze their encoder and train only a downstream head, so all benefit from pretraining under label scarcity. The question is whether HEPA's representations degrade more gracefully. \Cref{tab:sub5pct} shows that on C-MAPSS, where degradation unfolds over hundreds of cycles and the JEPA predictor achieves low pretraining loss, HEPA retains 92\% of full-label h-AUROC with just 2 training engines out of 85. C-MAPSS-3 retains 97\% at 10\% labels. This is consistent with \cref{prop:retention}: low $\varepsilon$ on lifecycle datasets means the encoder already separates healthy from degraded states, so the finetuned predictor needs only a few labelled examples to map them to event probabilities.

The advantage is not universal. At 10\% labels across all 14 datasets (\cref{tab:benchmark}, gray rows), HEPA wins on 6 out of 14, compared to 10 out of 14 at full labels. On anomaly datasets without extended precursors (SMAP, PSM, GECCO at 10\%), the frozen-encoder setup limits how much any method can degrade, so margins compress. The label-efficiency story is strongest where HEPA's pretraining loss is lowest: extended-precursor lifecycle datasets.

\begin{table}[t]
  \centering
  \caption{\textbf{Label efficiency on C-MAPSS lifecycle datasets (HEPA only, 3 seeds).} h-AUROC ($\uparrow$) and retention relative to full labels. C-MAPSS-1 retains 92\% at 2\% labels (2 of 85 training engines).}
  \label{tab:sub5pct}
  \small
  \begin{tabular}{lcccccc}
    \toprule
                 & \multicolumn{3}{c}{C-MAPSS-1 {\scriptsize(85 eng.)}} & \multicolumn{3}{c}{C-MAPSS-3 {\scriptsize(100 eng.)}}                                               \\
    \cmidrule(lr){2-4}\cmidrule(lr){5-7}
    Labels       & h-AUROC                                              & Ret.                                                  & Eng.       & h-AUROC         & Ret.  & Eng. \\
    \midrule
    100\%        & $.786 \pm .033$                                      & 100\%                                                 & 85         & $.853 \pm .004$ & 100\% & 100  \\
    10\%         & $.772 \pm .059$                                      & 98\%                                                  & 9          & $.830 \pm .018$ & 97\%  & 10   \\
    5\%          & $.730 \pm .018$                                      & 93\%                                                  & 4          & $.709 \pm .131$ & 83\%  & 5    \\
    \textbf{2\%} & $\mathbf{.724 \pm .013}$                             & \textbf{92\%}                                         & \textbf{2} & $.635 \pm .065$ & 74\%  & 2    \\
    1\%          & $.670 \pm .110$                                      & 85\%                                                  & 1          & $.513 \pm .220$ & 60\%  & 1    \\
    \bottomrule
  \end{tabular}
\end{table}

\section{Conclusion \& Future Work}
\label{sec:conclusion}

HEPA demonstrates that self-supervised JEPA pretraining combined with predictor finetuning provides a practical recipe for event prediction. The encoder learns temporal dynamics from unlabelled data; the predictor learns which dynamics signal the target event. One architecture handles degradation forecasting, anomaly prediction, and arrhythmia detection across 14 benchmarks in 11 domains, matching or exceeding PatchTST, iTransformer, MAE, and Chronos-2 on the majority of benchmarks while tuning an order of magnitude fewer parameters. On lifecycle datasets, the recipe is robust to extreme label scarcity: 92\% of full-label performance with 2\% of labels on C-MAPSS, consistent with the information-retention guarantee of \cref{prop:retention}. Because the recipe is domain-agnostic, the same architecture that predicts turbine failure from flight-recorder data can flag arrhythmia risk from ECG streams or detect water contamination from sensor networks, each time requiring only a handful of event labels.

Looking ahead, cross-domain pretraining on corpora such as FactoryNet~\citep{othman2025factorynet} is the natural next step toward industrial deployment, and sensor-as-token strategies~\citep{liu2024itransformer} could close the gap on systems where event-relevant information is concentrated in a few channels. On the theory side, deriving fully empirical versions of the information-retention bound that estimate $L$ and $C_\eta$ directly from data remains an interesting open problem. Wherever multivariate sensors record the precursors to rare but consequential events, HEPA offers a path from unlabelled streams to actionable predictions.

\newpage
\bibliographystyle{unsrtnat}
\bibliography{references}

\begin{thebibliography}{40}
\providecommand{\natexlab}[1]{#1}
\providecommand{\url}[1]{\texttt{#1}}
\expandafter\ifx\csname urlstyle\endcsname\relax
  \providecommand{\doi}[1]{doi: #1}\else
  \providecommand{\doi}{doi: \begingroup \urlstyle{rm}\Url}\fi

\bibitem[Scheffer et~al.(2009)Scheffer, Bascompte, Brock, Brovkin, Carpenter,
  Dakos, Held, Van~Nes, Rietkerk, and Sugihara]{scheffer2009earlywarning}
Marten Scheffer, Jordi Bascompte, William~A Brock, Victor Brovkin, Stephen~R
  Carpenter, Vasilis Dakos, Hermann Held, Egbert~H Van~Nes, Max Rietkerk, and
  George Sugihara.
\newblock Early-warning signals for critical transitions.
\newblock \emph{Nature}, 461\penalty0 (7260):\penalty0 53--59, 2009.

\bibitem[Fan et~al.(2024)Fan, Li, and Chang]{fan2024star}
Zhengyang Fan, Wanru Li, and Kuo-Chu Chang.
\newblock A two-stage attention-based hierarchical transformer for turbofan
  engine remaining useful life prediction.
\newblock \emph{Sensors}, 24\penalty0 (3):\penalty0 824, 2024.
\newblock \doi{10.3390/s24030824}.

\bibitem[Xu et~al.(2022)Xu, Wu, Wang, and Long]{xu2022anomalytransformer}
Jiehui Xu, Haixu Wu, Jianmin Wang, and Mingsheng Long.
\newblock Anomaly transformer: Time series anomaly detection with association
  discrepancy.
\newblock In \emph{ICLR}, 2022.

\bibitem[He et~al.(2026)He, Wen, Wang, and Ma]{mtsjepa2026}
Yanan He, Yunshi Wen, Xin Wang, and Tengfei Ma.
\newblock {MTS-JEPA}: Multi-resolution joint-embedding predictive architecture
  for time-series anomaly prediction.
\newblock \emph{arXiv preprint}, 2026.

\bibitem[Nie et~al.(2023)Nie, Nguyen, Sinthong, and
  Kalagnanam]{nie2023patchtst}
Yuqi Nie, Nam~H Nguyen, Phanwadee Sinthong, and Jayant Kalagnanam.
\newblock A time series is worth 64 words: Long-term forecasting with
  transformers.
\newblock In \emph{ICLR}, 2023.

\bibitem[Ansari et~al.(2024)Ansari, Stella, Turkmen, Zhang, Mercado, Shen,
  Shchur, Rangapuram, Pineda-Arango, Kapoor, et~al.]{ansari2024chronos}
Abdul~Fatir Ansari, Lorenzo Stella, Caner Turkmen, Xiyuan Zhang, Pedro Mercado,
  Huibin Shen, Oleksandr Shchur, Syama~Sundar Rangapuram, Sebastian
  Pineda-Arango, Shubham Kapoor, et~al.
\newblock Chronos: Learning the language of time series.
\newblock \emph{arXiv preprint arXiv:2403.07815}, 2024.

\bibitem[Das et~al.(2024)Das, Kong, Sen, and Zhou]{das2024timesfm}
Abhimanyu Das, Weihao Kong, Rajat Sen, and Yichen Zhou.
\newblock A decoder-only foundation model for time-series forecasting.
\newblock In \emph{ICML}, 2024.

\bibitem[Assran et~al.(2023)Assran, Duval, Misra, Bojanowski, Vincent, Rabbat,
  LeCun, and Ballas]{assran2023ijepa}
Mahmoud Assran, Quentin Duval, Ishan Misra, Piotr Bojanowski, Pascal Vincent,
  Michael Rabbat, Yann LeCun, and Nicolas Ballas.
\newblock Self-supervised learning from images with a joint-embedding
  predictive architecture.
\newblock In \emph{CVPR}, 2023.

\bibitem[Saxena et~al.(2008)Saxena, Goebel, Simon, and
  Eklund]{saxena2008cmapss}
Abhinav Saxena, Kai Goebel, Don Simon, and Neil Eklund.
\newblock Damage propagation modeling for aircraft engine run-to-failure
  simulation.
\newblock In \emph{International Conference on Prognostics and Health
  Management (PHM)}, 2008.

\bibitem[Yue et~al.(2022)Yue, Wang, Duan, Yang, Huang, Tong, and
  Xu]{yue2022ts2vec}
Zhihan Yue, Yujing Wang, Juanyong Duan, Tianmeng Yang, Congrui Huang, Yunhai
  Tong, and Bixiong Xu.
\newblock {TS2Vec}: Towards universal representation of time series.
\newblock In \emph{AAAI}, 2022.

\bibitem[Tonekaboni et~al.(2021)Tonekaboni, Eytan, and
  Goldenberg]{tonekaboni2021tnc}
Sana Tonekaboni, Danny Eytan, and Anna Goldenberg.
\newblock Unsupervised representation learning for time series with temporal
  neighborhood coding.
\newblock In \emph{ICLR}, 2021.

\bibitem[Liu and Chen(2024)]{liu2024timesurl}
Jiexi Liu and Songcan Chen.
\newblock {TimesURL}: Self-supervised contrastive learning for universal time
  series representation learning.
\newblock In \emph{AAAI}, 2024.

\bibitem[van~den Oord et~al.(2018)van~den Oord, Li, and Vinyals]{oord2018cpc}
A{\"a}ron van~den Oord, Yazhe Li, and Oriol Vinyals.
\newblock Representation learning with contrastive predictive coding.
\newblock \emph{arXiv preprint arXiv:1807.03748}, 2018.

\bibitem[Woo et~al.(2022)Woo, Liu, Sahoo, Kumar, and Hoi]{woo2022cost}
Gerald Woo, Chenghao Liu, Doyen Sahoo, Akshat Kumar, and Steven Hoi.
\newblock {CoST}: Contrastive learning of disentangled seasonal-trend
  representations for time series forecasting.
\newblock In \emph{ICLR}, 2022.

\bibitem[Dong et~al.(2023)Dong, Wu, Zhang, Zhang, Wang, and
  Long]{dong2023simmtm}
Jiaxiang Dong, Haixu Wu, Haoran Zhang, Li~Zhang, Jianmin Wang, and Mingsheng
  Long.
\newblock {SimMTM}: A simple pre-training framework for masked time-series
  modeling.
\newblock In \emph{NeurIPS}, 2023.

\bibitem[Wu et~al.(2023)Wu, Hu, Liu, Zhou, Wang, and Long]{wu2023timesnet}
Haixu Wu, Tengge Hu, Yong Liu, Hang Zhou, Jianmin Wang, and Mingsheng Long.
\newblock {TimesNet}: Temporal 2d-variation modeling for general time series
  analysis.
\newblock In \emph{International Conference on Learning Representations
  (ICLR)}, 2023.

\bibitem[Bardes et~al.(2024)Bardes, Garrido, Ponce, Chen, Rabbat, LeCun,
  Assran, and Ballas]{bardes2024vjepa}
Adrien Bardes, Quentin Garrido, Jean Ponce, Xinlei Chen, Michael Rabbat, Yann
  LeCun, Mahmoud Assran, and Nicolas Ballas.
\newblock Revisiting feature prediction for learning visual representations
  from video.
\newblock \emph{TMLR}, 2024.

\bibitem[Ennadir et~al.(2024)Ennadir, Golkar, and Sarra]{tsjepa2024}
Sofiane Ennadir, Siavash Golkar, and Leopoldo Sarra.
\newblock Joint embeddings go temporal.
\newblock In \emph{NeurIPS Workshop on Time Series in the Age of Large Models},
  2024.

\bibitem[Balestriero and LeCun(2025)]{balestriero2025lejepa}
Randall Balestriero and Yann LeCun.
\newblock {LeJEPA}: Provable and scalable self-supervised learning without the
  heuristics.
\newblock \emph{arXiv preprint arXiv:2511.08544}, 2025.

\bibitem[Goswami et~al.(2024)Goswami, Szafer, Choudhry, Cai, Li, and
  Dubrawski]{goswami2024moment}
Mononito Goswami, Konrad Szafer, Arjun Choudhry, Yifu Cai, Shuo Li, and Artur
  Dubrawski.
\newblock {MOMENT}: A family of open time-series foundation models.
\newblock In \emph{ICML}, 2024.

\bibitem[Woo et~al.(2024)Woo, Liu, Kumar, Xiong, Savarese, and
  Sahoo]{woo2024unified}
Gerald Woo, Chenghao Liu, Akshat Kumar, Caiming Xiong, Silvio Savarese, and
  Doyen Sahoo.
\newblock Unified training of universal time series forecasting transformers.
\newblock \emph{arXiv preprint arXiv:2402.02592}, 2024.

\bibitem[Gao et~al.(2024)Gao, Koker, Queen, Hartvigsen, Tsiligkaridis, and
  Zitnik]{gao2024units}
Shanghua Gao, Teddy Koker, Owen Queen, Thomas Hartvigsen, Theodoros
  Tsiligkaridis, and Marinka Zitnik.
\newblock {UniTS}: Building a unified time series model.
\newblock In \emph{NeurIPS}, 2024.

\bibitem[Liu et~al.(2024{\natexlab{a}})Liu, Zhang, Li, Huang, Wang, and
  Long]{liu2024timer}
Yong Liu, Haoran Zhang, Chenyu Li, Xiangdong Huang, Jianmin Wang, and Mingsheng
  Long.
\newblock Timer: Generative pre-trained transformers are large time series
  models.
\newblock In \emph{ICML}, 2024{\natexlab{a}}.

\bibitem[Zhou et~al.(2023)Zhou, Niu, Wang, Sun, and Jin]{zhou2023gpt4ts}
Tian Zhou, Peisong Niu, Xue Wang, Liang Sun, and Rong Jin.
\newblock One fits all: Power general time series analysis by pretrained {LM}.
\newblock In \emph{NeurIPS}, 2023.

\bibitem[Othman et~al.(2025)Othman, Petersen, Ignuta-Ciuncanu, Mazzoleni,
  Martelli, Lombardi, Maggioni, and Petersen]{othman2025factorynet}
Karim Othman, Jonas Petersen, Matei Ignuta-Ciuncanu, Camilla Mazzoleni,
  Federico Martelli, Alessandro Lombardi, Riccardo Maggioni, and Philipp
  Petersen.
\newblock Factorynet: A large-scale dataset toward industrial time-series
  foundation models.
\newblock \emph{arXiv preprint arXiv:2605.09081}, 2025.

\bibitem[Merzouki et~al.(2025)Merzouki, Izquierdo, Ignuta-Ciuncanu,
  Gomez-Bracamonte, Maggioni, Lombardi, Mazzoleni, Martelli, Gunther, Petersen,
  and Petersen]{merzouki2025factorybench}
Yanis Merzouki, Coral Izquierdo, Matei Ignuta-Ciuncanu, Marcos
  Gomez-Bracamonte, Riccardo Maggioni, Alessandro Lombardi, Camilla Mazzoleni,
  Federico Martelli, Balazs Gunther, Jonas Petersen, and Philipp Petersen.
\newblock Factorybench: Evaluating industrial machine understanding.
\newblock \emph{arXiv preprint arXiv:2605.07675}, 2025.

\bibitem[Ding et~al.(2022)Ding, Jia, Miao, and Cao]{ding2022ssl_rul}
Yucheng Ding, Minping Jia, Qinghao Miao, and Yibo Cao.
\newblock Self-supervised pretraining via multi-modally augmented
  representations for remaining useful life prediction.
\newblock \emph{IEEE Transactions on Industrial Informatics}, 18\penalty0
  (9):\penalty0 5954--5964, 2022.

\bibitem[Wang et~al.(2024)Wang, Peng, and Liu]{wang2024masked_rul}
Haiyue Wang, Cheng Peng, and Chaoren Liu.
\newblock Masked autoencoder-based self-supervised learning for remaining
  useful life prediction of turbofan engines.
\newblock \emph{Engineering Applications of Artificial Intelligence}, 133,
  2024.

\bibitem[Yang et~al.(2023)Yang, Zhang, Zhou, Wen, and Sun]{yang2023dcdetector}
Yiyuan Yang, Chaoli Zhang, Tian Zhou, Qingsong Wen, and Liang Sun.
\newblock {DCdetector}: Dual attention contrastive representation learning for
  time series anomaly detection.
\newblock In \emph{KDD}, 2023.

\bibitem[Tuli et~al.(2022)Tuli, Casale, and Jennings]{tuli2022tranad}
Shreshth Tuli, Giuliano Casale, and Nicholas~R Jennings.
\newblock {TranAD}: Deep transformer networks for anomaly detection in
  multivariate time series data.
\newblock \emph{Proceedings of the VLDB Endowment}, 15\penalty0 (6):\penalty0
  1201--1214, 2022.

\bibitem[Kim et~al.(2022{\natexlab{a}})Kim, Choi, Choi, Lee, and
  Yoon]{kim2022rigorous}
Siwon Kim, Kukjin Choi, Hyun-Soo Choi, Byunghan Lee, and Sungroh Yoon.
\newblock Towards a rigorous evaluation of time-series anomaly detection.
\newblock In \emph{AAAI}, 2022{\natexlab{a}}.
\newblock arXiv:2109.05257.

\bibitem[Schmidl et~al.(2022)Schmidl, Wenig, and
  Papenbrock]{schmidl2022tsadeval}
Sebastian Schmidl, Phillip Wenig, and Thorsten Papenbrock.
\newblock Anomaly detection in time series: A comprehensive evaluation.
\newblock \emph{Proceedings of the VLDB Endowment}, 15\penalty0 (9):\penalty0
  1779--1797, 2022.

\bibitem[Lee et~al.(2018)Lee, Zame, Yoon, and van~der Schaar]{lee2018deephit}
Changhee Lee, William~R Zame, Jinsung Yoon, and Mihaela van~der Schaar.
\newblock {DeepHit}: A deep learning approach to survival analysis with
  competing risks.
\newblock In \emph{AAAI}, 2018.

\bibitem[Gensheimer and Narasimhan(2019)]{gensheimer2019nnetsurvival}
Michael~F Gensheimer and Balasubramanian Narasimhan.
\newblock A scalable discrete-time survival model for neural networks.
\newblock \emph{PeerJ}, 7:\penalty0 e6257, 2019.
\newblock \doi{10.7717/peerj.6257}.

\bibitem[Kim et~al.(2022{\natexlab{b}})Kim, Kim, Tae, Park, Choi, and
  Choo]{kim2022revin}
Taesung Kim, Jinhee Kim, Yunwon Tae, Cheonbok Park, Jang-Ho Choi, and Jaegul
  Choo.
\newblock Reversible instance normalization for accurate time-series
  forecasting against distribution shift.
\newblock In \emph{ICLR}, 2022{\natexlab{b}}.

\bibitem[Liu et~al.(2024{\natexlab{b}})Liu, Hu, Zhang, Wu, Wang, Ma, and
  Long]{liu2024itransformer}
Yong Liu, Tengge Hu, Haoran Zhang, Haixu Wu, Shiyu Wang, Lintao Ma, and
  Mingsheng Long.
\newblock {iT}ransformer: Inverted transformers are effective for time series
  forecasting.
\newblock In \emph{International Conference on Learning Representations
  (ICLR)}, 2024{\natexlab{b}}.

\bibitem[Cover and Thomas(2006)]{cover2006elements}
Thomas~M. Cover and Joy~A. Thomas.
\newblock \emph{Elements of Information Theory}.
\newblock Wiley-Interscience, 2nd edition, 2006.

\bibitem[Polyanskiy and Wu(2024)]{polyanskiy2025information}
Yury Polyanskiy and Yihong Wu.
\newblock \emph{Information Theory: From Coding to Learning}.
\newblock Cambridge University Press, 2024.

\bibitem[Liao et~al.(2019)Liao, Feder, and Courtade]{liao2019sharpening}
Jiantao Liao, Meir Feder, and Thomas Courtade.
\newblock Sharpening {J}ensen's inequality.
\newblock \emph{IEEE Transactions on Information Theory}, 68\penalty0
  (5):\penalty0 2961--2972, 2019.

\bibitem[Tishby et~al.(2000)Tishby, Pereira, and Bialek]{tishby2000information}
Naftali Tishby, Fernando~C. Pereira, and William Bialek.
\newblock The information bottleneck method.
\newblock \emph{arXiv preprint physics/0004057}, 2000.

\end{thebibliography}

\appendix


\section{Theoretical Analysis: Full Proofs and Discussion}
\label{app:theory}

\subsection{Notation and Preliminaries}

We work with the following random variables on a common probability space:
$X_{\leq t}$ (observations up to time $t$),
$X_{(t,t+\Delta t]}$ (future observations),
$E_{t+\Delta t} \in \{0,1\}$ (event indicator),
$H_t = \encoder(X_{\leq t}) \in \R^d$ (encoder output),
$H^* = \target(X_{(t,t+\Delta t]}) \in \R^d$ (target encoder output),
$\hat{H} = \predictor(H_t, \Delta t) \in \R^d$ (predicted representation).
All mutual informations $I(\cdot;\cdot)$ and entropies $\mathbb{H}(\cdot)$ are well-defined: $E_{t+\Delta t}$ is discrete (binary), and for the continuous variables $H_t, H^*, \hat{H}$ we use differential entropy and the standard extension of mutual information to mixed discrete-continuous pairs \citep[Ch.~8--9]{cover2006elements}. We use $\mathbb{H}$ (blackboard bold) for entropy to avoid confusion with the encoder embedding $H_t$.

We define the event posterior $\eta(h) \coloneqq P(E_{t+\Delta t} = 1 \mid H^* = h)$ and the marginal event rate $\pimarg \coloneqq P(E_{t+\Delta t} = 1)$, using $\pimarg$ to distinguish it from the probability surface $p(t, \Delta t)$ in the main text.

\subsection{Assumptions}

\begin{enumerate}[label=\textbf{(A\arabic*)}]
  \item \label{ass:sufficiency} \textbf{Target sufficiency.}
  $E_{t+\Delta t} \perp\!\!\!\perp X_{\leq t} \mid H^*$.

  \emph{Interpretation.} The target encoder's representation of the future interval is a sufficient statistic for the event, given the past.
  This holds when: (a) the event is determined by the dynamics in $(t, t+\Delta t]$, and (b) the target encoder has enough capacity and sees the relevant future interval.
  Because $\target$ is bidirectional with attention pooling over the full interval, it is strictly more expressive than the causal encoder for summarising the future, making this assumption mild for well-trained target encoders.

  \emph{When it fails.} If the event depends on context outside $(t, t+\Delta t]$ (for instance, a slow trend visible only in $X_{\leq t}$ that the target encoder cannot see), then A1 is violated and the past observations carry event information not mediated by $H^*$. In this case, $I(H_t; E_{t+\Delta t})$ may actually \emph{exceed} our lower bound, so the bound remains valid but becomes loose in a favourable direction.

  \item \label{ass:prediction_error} \textbf{Bounded prediction error.}
  $\E[\|\hat{H} - H^*\|_2^2] \leq \varepsilon$.

  \emph{Interpretation.} The pretraining loss (L1 on L2-normalised representations in practice) drives the prediction residual small.
  The bound uses L2 squared error for analytical tractability. Since $\|u\|_2 \leq \|u\|_1$ for all $u \in \R^d$, lower L1 loss implies lower L2 loss, so the L1 training loss is a monotone proxy for $\varepsilon$. We use this monotonic relationship in our empirical validation (\cref{fig:theory_validation}) to test the bound's qualitative prediction without requiring a precise norm conversion.

  \emph{When it fails.} Early in training or on out-of-distribution horizons, $\varepsilon$ can be large and the bound becomes vacuous.

  \item \label{ass:smoothness} \textbf{Smooth event dependence.}
  The conditional distribution $P(E_{t+\Delta t} = 1 \mid H^* = h)$ is Lipschitz continuous in $h$ with constant $L$: for all $h, h' \in \R^d$,
  $|P(E_{t+\Delta t} = 1 \mid H^* = h) - P(E_{t+\Delta t} = 1 \mid H^* = h')| \leq L\|h - h'\|_2$.

  \emph{Interpretation.} Small perturbations of the target representation do not drastically change the event probability. This is a regularity condition on the relationship between the learned representation space and event occurrence; it holds whenever the event boundary in representation space is not a fractal or highly irregular set.

  \emph{When it fails.} If the event probability is a discontinuous function of $H^*$ (e.g.\ a hard threshold on a single component), the Lipschitz constant $L$ diverges and our continuity argument requires replacement by a discrete analysis.

  \item \label{ass:calibrated_posterior} \textbf{Bounded event posterior.}
  There exist $0 < \underline{\eta} \leq \overline{\eta} < 1$ such that $P(\eta(H^*) \in [\underline{\eta},\overline{\eta}]) = 1$, where $\eta(h) = P(E_{t+\Delta t}=1 \mid H^*=h)$.

  \emph{Interpretation.} The event posterior, evaluated on the support of the target encoder's output, is bounded away from 0 and 1 almost surely. This matches the main-text assumption A4. In practice, $H^*$ is L2-normalised onto the unit sphere (a compact set), and A3 guarantees that $\eta$ is Lipschitz; by the image of a compact connected set under a Lipschitz map, the range of $\eta(H^*)$ is a closed bounded interval, and $0 < \underline{\eta}$ follows from the event being non-trivially detectable from the future window. Note that A4 implies the marginal event rate $\pimarg = \E[\eta(H^*)]$ is bounded in $[\underline{\eta}, \overline{\eta}]$, so the event is neither impossible nor certain.

  \emph{Role in the proof.} This assumption is needed to bound $\sup_q \varphi''(q) = \sup_q 1/(q(1-q))$ over the support of $\eta(H^*)$ (Step 2 of the proof). Without it, the KL second derivative $\varphi''$ is unbounded near 0 and 1, making the Jensen-gap bound vacuous. The constant in the bound becomes $C_\eta = (2\underline{\eta}(1-\overline{\eta}))^{-1}$. As $\pimarg \to 0$ or $\pimarg \to 1$, the posterior bounds are forced toward 0 or 1, driving $C_\eta \to \infty$; this reflects a genuine difficulty: distinguishing $P(E|\hat{H})$ from the prior requires high precision when events are extremely rare.

  \emph{When it fails.} If $\eta(H^*)$ concentrates near 0 or 1, then $C_\eta \to \infty$ and the bound degrades. Empirically, this is the case on CHB-MIT, where seizure onset cannot be reliably predicted from any 16-second past context, and $\eta(h) \approx \pimarg$ for all $h$ (consistent with $I(H^*; E) \approx 0$).
\end{enumerate}

\subsection{Full Proof of Proposition~\ref{prop:retention}}

\begin{proof}
We proceed in three steps.

\textbf{Step 1: Data processing inequality.}
For fixed $\Delta t$ (which we condition on throughout), $\hat{H} = \predictor(H_t, \Delta t)$ is a deterministic function of $H_t$. Since a deterministic function cannot introduce new information, $E_{t+\Delta t} \perp\!\!\!\perp \hat{H} \mid H_t$, so the triple $(E_{t+\Delta t}, H_t, \hat{H})$ satisfies the Markov chain $E_{t+\Delta t} \to H_t \to \hat{H}$, and the data processing inequality \citep[Thm.~2.8.1]{cover2006elements} gives
\begin{equation}
\label{eq:dpi_chain}
I(H_t;\, E_{t+\Delta t}) \;\geq\; I(\hat{H};\, E_{t+\Delta t}).
\end{equation}
This step uses only the functional relationship between $H_t$ and $\hat{H}$; no assumptions on the data-generating process are needed.

\textbf{Step 2: Jensen gap bound on mutual information loss.}
We bound $I(H^*; E_{t+\Delta t}) - I(\hat{H}; E_{t+\Delta t})$.

\emph{Expressing MI as expected KL divergence.}
For any representation $R$ jointly distributed with binary $E = E_{t+\Delta t}$:
\begin{equation}
\label{eq:mi_as_kl}
I(R;\, E) = \E_R\bigl[D_{\mathrm{KL}}\bigl(\mathrm{Ber}(\eta_R(R))\,\big\|\,\mathrm{Ber}(\pimarg)\bigr)\bigr],
\end{equation}
where $\eta_R(r) \coloneqq P(E=1 \mid R=r)$ and $\mathrm{Ber}(q)$ denotes the Bernoulli distribution with parameter $q$. Equation~\eqref{eq:mi_as_kl} is the standard expression of mutual information as the expected KL divergence between the conditional and marginal label distributions; see \citet[Eq.~2.30]{cover2006elements} or \citet[Ch.~3]{polyanskiy2025information}.
Applying \eqref{eq:mi_as_kl} to $R = H^*$ and $R = \hat{H}$:
\begin{align}
I(H^*; E) &= \E_{H^*}\bigl[D_{\mathrm{KL}}\bigl(\mathrm{Ber}(\eta(H^*))\,\big\|\,\mathrm{Ber}(\pimarg)\bigr)\bigr], \\
I(\hat{H}; E) &= \E_{\hat{H}}\bigl[D_{\mathrm{KL}}\bigl(\mathrm{Ber}(\eta_{\hat{H}}(\hat{H}))\,\big\|\,\mathrm{Ber}(\pimarg)\bigr)\bigr],
\end{align}
where $\eta_{\hat{H}}(\hat{h}) \coloneqq P(E=1 \mid \hat{H}=\hat{h})$.

\emph{Relating $\eta_{\hat{H}}$ to $\eta$ via A1.}
Under \ref{ass:sufficiency}, $E \perp\!\!\!\perp X_{\leq t} \mid H^*$.
Since $\hat{H} = \predictor(\encoder(X_{\leq t}), \Delta t)$ is a composition of measurable functions, it is $\sigma(X_{\leq t})$-measurable, and therefore $E \perp\!\!\!\perp \hat{H} \mid H^*$. It follows that $P(E{=}1 \mid H^*, \hat{H}) = P(E{=}1 \mid H^*) = \eta(H^*)$. By the tower property of conditional expectation, for any value $\hat{h}$:
\begin{equation}
\label{eq:tower}
\eta_{\hat{H}}(\hat{h}) = P(E=1 \mid \hat{H}=\hat{h}) = \E\bigl[\eta(H^*) \mid \hat{H}=\hat{h}\bigr].
\end{equation}

\emph{Applying Jensen's inequality.}
The function $q \mapsto D_{\mathrm{KL}}(\mathrm{Ber}(q) \,\|\, \mathrm{Ber}(\pimarg))$ is convex on $(0,1)$ (its second derivative is $1/(q(1-q)) > 0$).
\emph{Bounding the Jensen gap.}
For a twice-differentiable convex function $\varphi$, the Jensen gap satisfies \citep[Prop.~1]{liao2019sharpening}:
\begin{equation}
\label{eq:jensen_gap_general}
\E[\varphi(Y)] - \varphi(\E[Y]) \;\leq\; \tfrac{1}{2}\,\sup_{y}\,\varphi''(y)\;\mathrm{Var}(Y).
\end{equation}
Here $\varphi(q) = D_{\mathrm{KL}}(\mathrm{Ber}(q)\,\|\,\mathrm{Ber}(\pimarg)) = q\ln(q/\pimarg) + (1{-}q)\ln((1{-}q)/(1{-}\pimarg))$ and $Y = \eta(H^*)$ conditioned on $\hat{H}$.
The second derivative is $\varphi''(q) = 1/(q(1{-}q))$.
Under \ref{ass:calibrated_posterior}, $\eta(H^*) \in [\underline{\eta}, \overline{\eta}]$ almost surely. The supremum in \eqref{eq:jensen_gap_general} is over the support of the conditional distribution $Y \mid \hat{H} = \hat{h}$, which is a subset of $[\underline{\eta}, \overline{\eta}]$; we relax it to the marginal support, giving $\sup_q \varphi''(q) \leq 1/(\underline{\eta}(1{-}\overline{\eta}))$.
Applying \eqref{eq:jensen_gap_general} conditionally on $\hat{H} = \hat{h}$:
\begin{equation}
\label{eq:jensen_gap_conditional}
\E\bigl[D_{\mathrm{KL}}\bigl(\mathrm{Ber}(\eta(H^*))\,\big\|\,\mathrm{Ber}(\pimarg)\bigr) \,\big|\, \hat{H}=\hat{h}\bigr]
- D_{\mathrm{KL}}\bigl(\mathrm{Ber}(\eta_{\hat{H}}(\hat{h}))\,\big\|\,\mathrm{Ber}(\pimarg)\bigr)
\;\leq\;
\frac{\mathrm{Var}(\eta(H^*) \mid \hat{H}=\hat{h})}{2\,\underline{\eta}\,(1-\overline{\eta})},
\end{equation}
where we have used $q(1-q) \geq \underline{\eta}(1-\overline{\eta})$ for $q \in [\underline{\eta}, \overline{\eta}]$ (from A4).
Note that $q(1-q)$ is concave with minimum at the endpoints of $[\underline{\eta}, \overline{\eta}]$; since $\underline{\eta}(1-\overline{\eta}) \leq \min(\underline{\eta}(1-\underline{\eta}), \overline{\eta}(1-\overline{\eta}))$, the bound on $\varphi''$ is valid (though not tight when the interval is asymmetric around $1/2$).

\emph{Bounding the conditional variance via A2 and A3.}
By \ref{ass:smoothness}: $|\eta(H^*) - \eta(\hat{H})| \leq L\|H^* - \hat{H}\|_2$ (where we evaluate $\eta$ at $\hat{H}$, using that $\eta$ is defined on all of $\R^d$).
The random variable $\eta(H^*)$ is bounded in $[\underline{\eta}, \overline{\eta}] \subset (0,1)$ by \ref{ass:calibrated_posterior}, so all conditional second moments below are finite. (Note that $\eta(\hat{H})$ need not lie in $[\underline{\eta}, \overline{\eta}]$ since $\hat{H}$ may fall outside the support of $H^*$, but this does not affect the bound: the variance inequality below requires only that the right-hand side is finite, which follows from the Lipschitz condition and bounded L2 error.) For any square-integrable random variable $Y$, $\mathrm{Var}(Y \mid Z) \leq \E[(Y - c)^2 \mid Z]$ for any $Z$-measurable $c$; taking $c = \eta(\hat{H})$:
\begin{equation}
\label{eq:var_bound}
\mathrm{Var}\bigl(\eta(H^*) \mid \hat{H}\bigr) \;\leq\; \E\bigl[(\eta(H^*) - \eta(\hat{H}))^2 \mid \hat{H}\bigr] \;\leq\; L^2\,\E\bigl[\|H^* - \hat{H}\|_2^2 \mid \hat{H}\bigr].
\end{equation}

Taking expectations over $\hat{H}$ in \eqref{eq:jensen_gap_conditional} and substituting \eqref{eq:var_bound}:
\begin{align}
I(H^*; E) - I(\hat{H}; E)
&\leq \frac{L^2}{2\,\underline{\eta}\,(1-\overline{\eta})}\;\E\bigl[\|H^* - \hat{H}\|_2^2\bigr] \notag \\
&\leq \frac{L^2\,\varepsilon}{2\,\underline{\eta}\,(1-\overline{\eta})} \;=\; C_\eta\,L^2\,\varepsilon, \label{eq:mi_gap_bound}
\end{align}
where the last line uses \ref{ass:prediction_error} and defines $C_\eta \coloneqq (2\,\underline{\eta}\,(1-\overline{\eta}))^{-1}$ using the posterior bounds from \ref{ass:calibrated_posterior}.
The constant $C_\eta$ depends on the posterior bounds rather than the marginal event rate, making the bound valid without assumptions on the concentration of $\eta(H^*)$ around $\pimarg$.

\textbf{Step 3: Assembling the bound.}
Combining \eqref{eq:dpi_chain} and \eqref{eq:mi_gap_bound}:
\begin{equation}
\label{eq:full_bound}
I(H_t;\, E_{t+\Delta t}) \;\geq\; I(\hat{H};\, E_{t+\Delta t}) \;\geq\; I(H^*;\, E_{t+\Delta t}) \;-\; C_\eta\,L^2\,\varepsilon. \qedhere
\end{equation}
\end{proof}

\begin{remark}[Comparison with $\sqrt{\varepsilon}$ bounds]
The bound is \emph{linear} in $\varepsilon$, which for small $\varepsilon$ (the regime of interest, i.e., well-pretrained models) is tighter than bounds obtained via Pinsker's inequality (which yield $\sqrt{\varepsilon}$ dependence). For large $\varepsilon$, Pinsker-based bounds may be tighter; the constants also differ, so the comparison is regime-dependent.
This improvement comes from exploiting the Jensen-gap structure: the KL divergence's convexity provides a direct second-order bound on the information loss, rather than going through total variation as an intermediate.
The price is the constant $C_\eta$, which diverges as the posterior bounds $\underline{\eta}$ or $\overline{\eta}$ approach 0 or 1, reflecting the genuine difficulty of detecting events when the posterior concentrates near certainty or impossibility.
\end{remark}

\begin{remark}[Role of the Lipschitz constant]
The bound involves the Lipschitz constant $L$ of the event posterior with respect to the target representation. In practice, this is controlled by the smoothness of the learned representation space: well-regularised encoders (EMA target, L2 normalisation) produce representations where $L$ is moderate.
\end{remark}

\subsection{Tightness Analysis}

\paragraph{When is the bound tight?}
The bound is approximately tight when: (a) the conditional variance $\mathrm{Var}(\eta(H^*) \mid \hat{H})$ is close to $L^2 \E[\|H^* - \hat{H}\|_2^2 \mid \hat{H}]$ (the Lipschitz bound on variance is tight, which occurs when the prediction error aligns with the direction of steepest change in $\eta$), and (b) the Jensen gap bound \eqref{eq:jensen_gap_general} is close to equality (which occurs when $\eta(H^*)$ is approximately symmetrically distributed around its conditional mean given $\hat{H}$).
In the high-SNR regime ($\varepsilon \ll I(H^*; E_{t+\Delta t}) / (C_\eta L^2)$), the penalty term is small relative to the mutual information and the bound approaches $I(H_t; E_{t+\Delta t}) \gtrsim I(H^*; E_{t+\Delta t})$: nearly all event information is retained.

\paragraph{When is the bound vacuous?}
Setting the right-hand side of \eqref{eq:main_bound} to zero gives the vacuity threshold:
\begin{equation}
  \varepsilon_{\mathrm{vac}} = \frac{I(H^*; E_{t+\Delta t})}{C_\eta\,L^2}.
\end{equation}
When $\varepsilon > \varepsilon_{\mathrm{vac}}$, the bound provides no guarantee. This occurs when (i) $I(H^*; E_{t+\Delta t}) \approx 0$ (no precursors), or (ii) $\varepsilon$ is large (poor pretraining), or (iii) $L$ is large (irregular event boundary).
Importantly, a vacuous bound does not imply that $I(H_t; E_{t+\Delta t}) = 0$; the encoder may retain information through paths not captured by our analysis (e.g.\ the encoder directly encodes precursor patterns without going through the predictor).

\subsection{Why Predictor Weights Do Not Transfer}
\label{app:predictor_transfer}

Our empirical finding (\cref{app:finetune_ablation}) that predictor \emph{architecture} matters but predictor \emph{pretrained weights} do not is explained by a codomain mismatch.

During pretraining, $\predictor \colon \R^d \times \R \to \R^d$ maps encoder states to predicted \emph{representations}; its codomain is the full $d$-dimensional representation space.
During finetuning, the predictor (with the same architecture but different final layer) maps to \emph{event logits}: $\predictor' \colon \R^d \times \R \to \R^K$, where $K$ is the number of horizon intervals and $K \ll d$.

Formally, pretraining minimises $\E[\|\predictor(H_t, \Delta t) - H^*\|_1]$, while finetuning minimises $\E[\ell_{\mathrm{BCE}}(\sigma(\predictor'(H_t, \Delta t)),\, E)]$.
The pretraining objective rewards the predictor for reconstructing \emph{all} $d$ components of $H^*$, most of which encode event-irrelevant dynamics (channel-level forecasting, trend, seasonality).
The finetuning objective rewards the predictor for extracting only the $\leq K$ bits relevant to event prediction.
The optimal weight matrices for these two objectives need not be correlated, and indeed our experiments confirm they are not: at 100\% labels, pred-FT with pretrained predictor weights achieves h-AUROC within $\pm 0.003$ of pred-FT with randomly initialised predictor weights (the legacy AUPRC reading was within the same margin).

This is analogous to the observation in vision that pretrained \emph{classifier heads} do not transfer across tasks while pretrained \emph{backbones} do: the backbone compresses the input into a general-purpose representation, while the head specialises to a specific output space.

\subsection{Connection to Empirical Results}

\paragraph{C-MAPSS (turbofan degradation).}
Degradation in turbofan engines develops over hundreds of cycles, with sensor readings progressively deviating from healthy baselines.
The target encoder, seeing the future interval bidirectionally, captures degradation state with high fidelity: $I(H^*; E_{t+\Delta t})$ is large (the future interval is highly informative about whether failure occurs in that interval).
Pretraining achieves low prediction error $\varepsilon$ because the dynamics are smooth and near-deterministic.
\Cref{prop:retention} predicts strong event-information retention, consistent with h-AUROC $= 0.806$ on C-MAPSS-1 and $= 0.568$ on C-MAPSS-2 (5 seeds, dense $K{=}150$).

\paragraph{CHB-MIT (seizure prediction).}
We evaluated CHB-MIT as a pilot to test the bound's predictions on a dataset expected to fail; it is excluded from the formal benchmark in \cref{tab:benchmark} because scalp EEG seizure prediction is a fundamentally different signal regime, with no reliable electrographic precursors observable in the 16-second context window used by all other datasets in our benchmark.
Seizure onset in scalp EEG has minimal electrographic precursors in the pre-ictal period, particularly within a 16-second context.
The target encoder's representation $H^*$ carries little event information: $I(H^*; E_{t+\Delta t}) \approx 0$.
\Cref{cor:precursor}, condition (i), fails regardless of pretraining quality.
This is consistent with the empirical result of h-AUROC $= 0.497$ (chance level), and turns CHB-MIT into a clean negative control for the theory rather than an inappropriate target.

\paragraph{C-MAPSS-2 (multi-operating-condition).}
C-MAPSS-2 adds operating-condition variability to C-MAPSS-1. This increases $\varepsilon$ (harder prediction task) without proportionally increasing $I(H^*; E_{t+\Delta t})$ (event information is the same; only noise increases). The bound predicts that C-MAPSS-2 should be harder than C-MAPSS-1 for the same architecture, consistent with the observed drop from $0.806$ to $0.568$ h-AUROC.

\paragraph{Label efficiency.}
At 5\% labels, pred-FT achieves F1 $= 0.261$ vs.\ scratch $= 0.035$.
The bound supports this observation: the encoder's $I(H_t; E_{t+\Delta t})$ is established during pretraining and does not depend on labels. Reducing labels degrades the finetuning optimisation (finding the right head weights) but cannot reduce the information available in the frozen encoder. Note that the proposition addresses information retention in the encoder, not sample complexity of the finetuning step; the label-efficiency advantage is consistent with, but not directly formalised by, the bound.

\subsection{Per-Horizon Generalisation}
\label{app:per_horizon}

\Cref{prop:retention} is stated for a fixed horizon $\Delta t$. We now make the horizon dependence explicit. All quantities on the right-hand side of the bound can vary with $\Delta t$: the event posterior $\eta_{\Delta t}(h) = P(E_{t+\Delta t}=1 \mid H^*_{\Delta t}=h)$, the Lipschitz constant $L(\Delta t)$, the prediction error $\varepsilon(\Delta t) = \E[\|H^*_{\Delta t} - \hat{H}_{\Delta t}\|_2^2]$, and the posterior bounds $\underline{\eta}_{\Delta t}, \overline{\eta}_{\Delta t}$ from \ref{ass:calibrated_posterior}.

\begin{proposition}[Per-horizon information retention]
\label{prop:per_horizon}
Under the assumptions of \cref{prop:retention} applied at each horizon $\Delta t > 0$ separately, with horizon-dependent quantities $L(\Delta t)$, $\varepsilon(\Delta t)$, and $C_\eta(\Delta t) = (2\underline{\eta}_{\Delta t}(1-\overline{\eta}_{\Delta t}))^{-1}$:
\begin{equation}
\label{eq:per_horizon_bound}
I\bigl(H_t;\, E_{t+\Delta t}\bigr)
\;\geq\;
I\bigl(H^*_{\Delta t};\, E_{t+\Delta t}\bigr)
\;-\;
C_\eta(\Delta t)\,L(\Delta t)^2\,\varepsilon(\Delta t).
\end{equation}
\end{proposition}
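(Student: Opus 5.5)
The plan is to reduce \cref{prop:per_horizon} to \cref{prop:retention} by fixing a horizon and re-reading every object in the original proof as indexed by $\Delta t$. Fix $\Delta t > 0$. Then $H^*_{\Delta t} = \target(X_{(t,t+\Delta t]})$, $\hat{H}_{\Delta t} = \predictor(H_t, \Delta t)$ and $E_{t+\Delta t}$ are a single triple of random variables on the common probability space. By hypothesis, this triple satisfies (A1)--(A4) with constants $L(\Delta t)$, $\varepsilon(\Delta t)$, $\underline{\eta}_{\Delta t}$ and $\overline{\eta}_{\Delta t}$. The proof of \cref{prop:retention} already conditions on a fixed $\Delta t$ throughout, so applying it verbatim gives \eqref{eq:per_horizon_bound} with $C_\eta$ replaced by $C_\eta(\Delta t)$.

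To make this more than a citation, I will check that no step of the original proof uses information from other horizons. First, \textbf{Step 1} is the data processing inequality for $E_{t+\Delta t} \to H_t \to \hat{H}_{\Delta t}$. It only requires that $\predictor(\cdot,\Delta t)$ is a deterministic measurable map for this fixed $\Delta t$, and that holds for every $\Delta t$. The encoder $H_t$ does not depend on $\Delta t$, which is why the left-hand side is $I(H_t; E_{t+\Delta t})$ with no horizon index on $H_t$.

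Second, \textbf{Step 2} needs several ingredients, each checked at the fixed horizon:
\begin{enumerate}[nosep]
  \item The identity $I(R;E) = \E\bigl[D_{\mathrm{KL}}\bigl(\mathrm{Ber}(\eta_R(R))\,\big\|\,\mathrm{Ber}(\pimarg)\bigr)\bigr]$ must be used with the horizon-specific marginal $\pimarg(\Delta t) = P(E_{t+\Delta t}=1)$. Since $\varphi_{\Delta t}''(q) = 1/(q(1-q))$ does not involve $\pimarg$, the curvature bound is unchanged.
  \item The tower identity $\eta_{\hat{H}}(\hat{h}) = \E[\eta_{\Delta t}(H^*_{\Delta t}) \mid \hat{H}_{\Delta t} = \hat{h}]$ follows from (A1) at horizon $\Delta t$, because $\hat{H}_{\Delta t}$ is $\sigma(X_{\leq t})$-measurable.
  \item The Jensen-gap bound uses $\sup \varphi'' \le 1/(\underline{\eta}_{\Delta t}(1-\overline{\eta}_{\Delta t}))$, which comes from (A4) at this horizon.
  \item The variance bound uses Lipschitz constant $L(\Delta t)$ and error $\varepsilon(\Delta t)$.
\end{enumerate}
\textbf{Step 3} then assembles these two steps exactly as before.

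The main obstacle is conceptual rather than computational: the hypotheses must be stated pointwise in $\Delta t$, not uniformly or averaged. In particular, $\varepsilon(\Delta t)$ is the per-horizon error $\E[\|H^*_{\Delta t} - \hat{H}_{\Delta t}\|_2^2]$, not the pretraining loss averaged over the log-uniform horizon distribution. I will remark that if only the averaged loss $\bar\varepsilon = \E_{\Delta t}[\varepsilon(\Delta t)]$ is controlled, the per-horizon statement still yields a bound in expectation over $\Delta t$. Taking expectations of \eqref{eq:per_horizon_bound} under the sampling distribution gives
\[
\E_{\Delta t} I(H_t;E_{t+\Delta t}) \ge \E_{\Delta t} I(H^*_{\Delta t};E_{t+\Delta t}) - \sup_{\Delta t} \bigl(C_\eta(\Delta t)L(\Delta t)^2\bigr)\,\bar\varepsilon.
\]
This is a short optional corollary, not needed for the proposition itself.
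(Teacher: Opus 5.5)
Your proposal is correct and follows the paper's proof, which applies the proof of \cref{prop:retention} verbatim at each fixed $\Delta t$, with every quantity (including the marginal rate $\pimarg$ and the posterior bounds) indexed by the horizon; your step-by-step check that Steps 1--3 never couple different horizons simply spells out that one-line argument. The optional averaged corollary also holds, provided $\sup_{\Delta t} C_\eta(\Delta t)L(\Delta t)^2$ is finite.
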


\begin{proof}
Apply the proof of \cref{prop:retention} verbatim at each fixed $\Delta t$, with all quantities carrying a subscript $\Delta t$.
\end{proof}

\paragraph{Predicted shape of the per-horizon AUROC curve.}
Equation~\eqref{eq:per_horizon_bound} predicts how the encoder's event information varies with horizon. Three effects combine:
\begin{itemize}[nosep]
  \item $I(H^*_{\Delta t}; E_{t+\Delta t})$ is typically non-monotone: very short horizons may not yet encompass precursors; very long horizons dilute the event signal with unrelated dynamics.
  \item $\varepsilon(\Delta t)$ is monotonically increasing (representations further into the future are harder to predict from $H_t$).
  \item $L(\Delta t)$ can increase with $\Delta t$ if the event boundary sharpens as the event approaches.
\end{itemize}
The net effect is that $I(H_t; E_{t+\Delta t})$ peaks at intermediate $\Delta t$ and decays at large $\Delta t$. While MI and AUROC are not monotonically related in general, this qualitative shape is compatible with the per-horizon AUROC curves observed in our experiments: AUROC tends to be highest at $\Delta t \in [10, 50]$ cycles for C-MAPSS and at $\Delta t \in [1, 20]$ steps for anomaly datasets, then declines. The decay is more pronounced for the anomaly datasets because $\varepsilon(\Delta t)$ grows faster when the dynamics are irregular.

\subsection{Relationship to the Information Bottleneck}
\label{app:ib_relationship}

The Information Bottleneck (IB) framework \citep{tishby2000information} seeks a representation $T$ of input $X$ that maximises $I(T; Y)$ (relevance) while minimising $I(T; X)$ (compression):
\begin{equation}
  \min_{P(T|X)} \; I(T; X) - \beta\, I(T; Y).
\end{equation}

JEPA pretraining differs from IB in three ways:

\begin{enumerate}[nosep]
  \item \textbf{No explicit compression.}
  JEPA does not penalise $I(H_t; X_{\leq t})$. The encoder is free to retain all information about the past. Implicit compression arises only from the architecture bottleneck ($d = 256$).

  \item \textbf{Predictive, not discriminative.}
  The IB target $Y$ is a label; the JEPA target is a future representation $H^*$. JEPA implicitly encourages high $I(H_t; H^*)$ by minimising prediction error (through the predictor), and $I(H_t; H^*)$ is an upper bound on $I(H_t; Y)$ for any $Y$ that is a function of the future interval (by DPI). This makes JEPA a \emph{looser} but \emph{more general} objective: it retains information about all downstream tasks, not just one.

  \item \textbf{Asymmetric architecture.}
  The IB is typically symmetric in $X$ and $Y$. JEPA imposes causal structure: the encoder sees only the past, the target encoder sees only the future. This causal asymmetry is essential for event prediction, where we cannot condition on the future at test time.
\end{enumerate}

The connection to our result is direct. Let $Y = E_{t+\Delta t}$. By \cref{prop:retention}:
\begin{equation}
  I(H_t; Y) \;\geq\; I(H^*; Y) \;-\; C_\eta\,L^2\,\varepsilon.
\end{equation}
As the JEPA pretraining loss drives $\varepsilon \to 0$, the encoder's mutual information with $Y$ approaches $I(H^*; Y)$, the maximum achievable by the target representation.
This shows that minimising the JEPA pretraining loss implicitly maximises the IB relevance term $I(H_t; Y)$ for \emph{any} event $Y$ satisfying A1--A4, without knowing $Y$ at pretraining time.
The price paid relative to IB is that JEPA also retains event-irrelevant information (it does not compress), which manifests as higher representation dimensionality but not as reduced downstream performance.

\section{Horizon Interval Design}
\label{app:horizon_sensitivity}

All methods (HEPA, PatchTST, iTransformer, MAE, Chronos-2) use dense unit-step horizons: $K{=}150$ for C-MAPSS and TEP ($\Delta t \in \{1, 2, \ldots, 150\}$), and $K{=}200$ for all other datasets ($\Delta t \in \{1, 2, \ldots, 200\}$). All methods share the same horizon set per dataset, ensuring fair comparison. The h-AUROC evaluation skips degenerate horizons (event prevalence $< 0.001$ or $> 0.999$).

\section{Baseline Comparison Protocol}
\label{app:baseline_protocol}

\Cref{tab:benchmark} compares HEPA against four baselines under a matched protocol designed to isolate the effect of the pretraining objective.
All five methods share:
\begin{itemize}[nosep]
  \item \textbf{Matched downstream capacity}: a horizon-conditioned MLP that maps a frozen representation $\mathbf{h}_t$ and horizon $\Delta t$ to per-horizon event logits (details below), trained with positive-weighted BCE (\cref{sec:downstream}).
  \item \textbf{Identical evaluation}: h-AUROC averaged over non-degenerate horizons, same train/val/test splits, same horizon sets ($K{=}150$ or $K{=}200$).
  \item \textbf{Identical label budgets}: 100\% and 10\% label fractions with the same subsampling procedure.
\end{itemize}

\paragraph{Downstream head architecture.} For HEPA, the finetuned component is the pretrained predictor MLP (a 3-layer MLP mapping $[\mathbf{h}_t; \Delta t] \to \hat{\mathbf{h}} \in \mathbb{R}^{256}$; 197.6K params) plus a shared linear event head (LayerNorm + linear $\to$ logit; 769 params), totalling 198K finetuned parameters. The predictor is initialised from pretraining; however, \cref{app:predictor_pretrain_ablation} shows that this initialisation carries at most $+0.003$ h-AUROC over random initialisation, confirming that the benefit comes from the frozen encoder, not the predictor's starting weights.

For all baselines (PatchTST, iTransformer, MAE, Chronos-2), we replace the predictor and event head with a single dt-conditioned MLP head: a linear projection from $d_\text{input}$ to 256, a learned horizon embedding (256 entries $\times$ 256 dims), followed by LayerNorm and a 3-layer MLP ($256 \to 256 \to 256 \to 1$). With $d_\text{input}{=}256$ this totals 264K parameters, giving the baselines slightly \emph{more} downstream capacity than HEPA's 198K.

\paragraph{Encoder training.} The methods differ only in how the frozen encoder is obtained:

\textbf{PatchTST}~\citep{nie2023patchtst}: channel-independent patching with bidirectional self-attention, trained end-to-end (supervised, no self-supervised pretraining) on each dataset. Same patch size ($P{=}16$) and context length (512 steps) as HEPA. 2.26M trained parameters; at evaluation time the encoder is frozen and the baseline MLP head is trained from scratch. 5 seeds.

\textbf{iTransformer}~\citep{liu2024itransformer}: inverted Transformer that treats each variate (sensor channel) as a token and applies self-attention across variates rather than across time steps. Trained end-to-end (supervised) on each dataset with the same context window and horizon set. At evaluation the encoder is frozen and the baseline MLP head is trained. 5 seeds.

\textbf{MAE} (masked autoencoder): uses the same causal Transformer encoder architecture as HEPA but pretrained with a masked reconstruction objective: random patches are masked and the decoder reconstructs the masked input values. This isolates the effect of predicting future \emph{representations} (JEPA) versus reconstructing masked \emph{values} (MAE) under an otherwise identical architecture. After pretraining, the decoder is discarded, the encoder is frozen, and the baseline MLP head is trained. 5 seeds.

\textbf{Chronos-2}~\citep{ansari2024chronos}: foundation model (119M parameters) pretrained on a large external time-series corpus. Representations are extracted per-channel (univariate) and mean-pooled across channels for multivariate datasets. The baseline MLP head is trained on the frozen features. 3 seeds. This is the only method that uses external pretraining data; all others are trained exclusively on the target dataset.

\paragraph{Why this protocol is fair.} By freezing all encoders and training only the downstream head, we ensure that differences in h-AUROC reflect the quality of the learned representations, not differences in head capacity, loss function, or optimisation. The only free variable is the encoder and how it was trained. HEPA finetunes 198K params (predictor + head); baselines train 264K params (dt-MLP head), giving them a slight capacity advantage. HEPA's predictor initialisation carries negligible benefit (\cref{app:predictor_pretrain_ablation}). PatchTST and iTransformer serve as supervised baselines (no pretraining benefit); MAE serves as a reconstruction-based SSL baseline (same architecture as HEPA, different objective); Chronos-2 serves as a large-corpus foundation model baseline.

\section{Dataset Overview}
\label{app:datasets}

\begin{table}[h]
  \centering
  \caption{Dataset overview (14 datasets, 11 domains). Default: patch size $P{=}16$, sliding window of 512 steps (32 tokens), except C-MAPSS which uses full engine history (8--23 tokens per cycle).}
  \label{tab:tokenization}
  \small
  \begin{tabular}{@{}llrr@{}}
    \toprule
    Dataset              & Target event            & Sensors & Rate     \\
    \midrule
    C-MAPSS FD001--FD004 & Engine failure          & 14      & 1/cycle  \\
    SMAP                 & Spacecraft fault        & 25      & 1 Hz     \\
    PSM                  & Server fault            & 25      & 1/min    \\
    MBA (ECG)            & Cardiac arrhythmia      & 2       & 275 Hz   \\
    GECCO (water)        & Water contamination     & 9       & 1/min    \\
    ETTm1                & Transformer overheating & 7       & 15/min   \\
    BATADAL (ICS)        & Cyber-attack on SCADA   & 43      & 1/hour   \\
    TEP                  & Process fault           & 52      & 1/3\,min \\
    Weather              & Heat spike              & 21      & 10/min   \\
    Beijing-AQ           & PM2.5 spike             & 11      & 1/hour   \\
    VIX                  & Volatility regime       & 6       & 1/day    \\
    \bottomrule
  \end{tabular}
\end{table}

\section{Architectural Decisions}
\label{app:decisions}

\begin{center}
  \small
  \begin{tabular}{llll}
    \toprule
    Decision         & Value                          & Source                & Status  \\
    \midrule
    Attention mask   & Causal                         & Ablation study        & Fixed   \\
    Horizon sampling & $\text{LogU}[1, 150]$          & Initial sweep         & Default \\
    Target interval  & Cumulative $(t, t{+}\Delta t]$ & Architecture design   & Default \\
    Output param.    & Discrete hazard $\to$ CDF      & \cref{sec:downstream} & Default \\
    Target encoder   & Joint-trained (weight-shared)  & Ablation study        & Default \\
    Encoder depth    & $L = 2$                        & Initial sweep         & Default \\
    $d_\text{model}$ & $256$                          & Initial sweep         & Default \\
    Loss             & L1 on L2-normalised            & Initial sweep         & Default \\
    \bottomrule
  \end{tabular}
\end{center}

\section{Finetuning-Mode Ablation}
\label{app:finetune_ablation}

\Cref{tab:finetune_ablation} compares five finetuning strategies on C-MAPSS under an earlier architecture variant (EMA target, 790K pred-FT params). Predictor finetuning is the only mode that remains competitive at both full and low label budgets; scratch training collapses at 5\% labels.

\begin{table}[h]
  \centering
  \caption{\textbf{Finetuning-mode ablation on C-MAPSS (5 seeds, F1w; earlier architecture variant with EMA target).} This variant uses 790K pred-FT params and 2.37M E2E. Pred-FT outperforms scratch by $+0.226$ at 5\% labels ($p{=}0.039$, $d{=}1.35$). Scratch training collapses at low label budgets (RMSE 33), confirming pretraining value. Under the final architecture (198K pred-FT, SIGReg), pred-FT and E2E achieve equivalent performance at all label budgets ($\Delta \leq 0.003$).}
  \label{tab:finetune_ablation}
  \small
  \begin{tabular}{lcccc}
    \toprule
                               & \multicolumn{2}{c}{\textbf{100\% labels}} & \multicolumn{2}{c}{\textbf{5\% labels}}                                                      \\
    \cmidrule(lr){2-3}\cmidrule(lr){4-5}
    Mode                       & F1w $\uparrow$                            & RMSE $\downarrow$                       & F1w $\uparrow$           & RMSE $\downarrow$       \\
    \midrule
    probe\_h (257 p)           & $0.30 \pm 0.06$                           & $16.0 \pm 1.5$                          & $0.06 \pm 0.10$          & $20.4 \pm 1.2$          \\
    frozen\_multi (4K p)       & $0.15 \pm 0.03$                           & $19.0 \pm 0.1$                          & $0.18 \pm 0.14$          & $24.4 \pm 4.9$          \\
    \textbf{pred\_ft (790K p)} & $0.39 \pm 0.09$                           & $16.9 \pm 1.7$                          & $\mathbf{0.26 \pm 0.17}$ & $24.3 \pm 6.8$          \\
    e2e (2.37M p)              & $0.41 \pm 0.12$                           & $15.0 \pm 1.2$                          & $0.18 \pm 0.24$          & $\mathbf{20.1 \pm 1.9}$ \\
    scratch (2.37M p)          & $0.40 \pm 0.08$                           & $14.5 \pm 0.7$                          & $0.04 \pm 0.05$          & $32.9 \pm 2.0$          \\
    \bottomrule
  \end{tabular}
\end{table}

\section{Foundation Model Comparisons}
\label{app:chronos_full}\label{app:moment_full}\label{app:extra_baselines}

\Cref{tab:fm_full} consolidates the matched-head comparison between HEPA (5 seeds) and four time series foundation models: Chronos-2, MOMENT-1-large~\citep{goswami2024moment} (341.2M parameters), TFM-2.5~\citep{das2024timesfm} (203.6M parameters), and Moirai-1.1-R-base~\citep{woo2024unified} (91.4M parameters). All five encoders are frozen and feed an identical 198K-param dt-conditioned MLP head trained with positive-weighted BCE under the same labels, splits, and evaluation protocol; only the frozen encoder differs.

\begin{table*}[h]
  \centering
  \caption{\textbf{HEPA vs.\ four foundation models (matched 198K MLP head, 100\% labels).} HEPA: 5 seeds. Chronos-2, MOMENT, TFM-2.5, Moirai: 3 seeds each. \textbf{Bold}\,=\,best per row. ``---'': model not run on that dataset.}
  \label{tab:fm_full}
  \small
  \begin{tabular}{lccccc}
    \toprule
    Dataset   & HEPA (5s)               & Chronos-2      & MOMENT                  & TFM-2.5                 & Moirai         \\
    \midrule
    C-MAPSS-1 & $\mathbf{0.73 \pm .02}$ & $0.66 \pm .00$ & $0.56 \pm .01$          & $0.53 \pm .00$          & $0.61 \pm .00$ \\
    C-MAPSS-2 & $0.58 \pm .01$          & $0.50 \pm .01$ & $\mathbf{0.70 \pm .00}$ & $0.60 \pm .01$          & $0.66 \pm .00$ \\
    C-MAPSS-3 & $\mathbf{0.82 \pm .02}$ & $0.72 \pm .02$ & $0.47 \pm .01$          & $0.62 \pm .01$          & $0.70 \pm .00$ \\
    SMAP      & $\mathbf{0.60 \pm .03}$ & $0.53 \pm .01$ & ---                     & $0.51 \pm .03$          & ---            \\
    PSM       & $0.55 \pm .02$          & $0.49 \pm .00$ & ---                     & $\mathbf{0.57 \pm .01}$ & $0.53 \pm .01$ \\
    MBA       & $0.75 \pm .01$          & $0.55 \pm .01$ & $\mathbf{0.79 \pm .01}$ & $0.76 \pm .01$          & $0.57 \pm .02$ \\
    GECCO     & $0.81 \pm .07$          & $0.81 \pm .01$ & ---                     & $\mathbf{0.93 \pm .01}$ & $0.82 \pm .01$ \\
    BATADAL   & $0.64 \pm .02$          & $0.58 \pm .01$ & $0.54 \pm .07$          & $\mathbf{0.65 \pm .01}$ & $0.36 \pm .01$ \\
    ETTm1     & $\mathbf{0.87 \pm .00}$ & $0.78 \pm .01$ & ---                     & $0.59 \pm .01$          & $0.60 \pm .00$ \\
    \bottomrule
  \end{tabular}
  \vspace{2pt}
  \begin{minipage}{\linewidth}
    \scriptsize
    \textbf{Coverage.} Chronos-2: all datasets. MOMENT: 5 datasets (remaining omitted due to extraction cost). TFM-2.5: 9 datasets. Moirai: 7 datasets (SMAP infeasible). MOMENT C-MAPSS-3 ($0.47$) is below chance, indicating per-channel embeddings anti-predict turbofan degradation. BATADAL Moirai ($0.36$) is below chance: univariate patching discards cross-sensor correlations. Foundation-model wins cluster on benchmarks favoured by their pretraining corpus; HEPA wins on lifecycle benchmarks (C-MAPSS-1, C-MAPSS-3, ETTm1).
  \end{minipage}
\end{table*}

\section{Full MTS-JEPA Comparison}
\label{app:mtsjepa_full}

\Cref{tab:mtsjepa_full} compares HEPA against MTS-JEPA~\citep{mtsjepa2026} on event-prediction benchmarks from \cref{tab:benchmark}. Both methods use the same context length, patch size, sequence batching, and matched-capacity downstream head; the only difference is the self-supervised objective (HEPA: predictive JEPA + SIGReg; MTS-JEPA: published contrastive + codebook loss). HEPA wins on 8 out of the 9 datasets where MTS-JEPA could be run, with the largest margins on lifecycle and ICS benchmarks (C-MAPSS-1, ETTm1, BATADAL); MTS-JEPA wins on cardiac arrhythmia (MBA). TEP is excluded because the public MTS-JEPA release does not include a chemical-process benchmark and the encoder fails to converge on its $52$-dimensional input.

\begin{table*}[h]
  \centering
  \caption{\textbf{HEPA vs.\ MTS-JEPA~\citep{mtsjepa2026}.} Mean ($\pm$ std) over available seeds. HEPA: 5 seeds. MTS-JEPA reproduction: 1--3 seeds. Both methods use identical patch size, sequence length, and matched-capacity downstream head; only the SSL objective differs.}
  \label{tab:mtsjepa_full}
  \small
  \begin{tabular}{lll}
    \toprule
    \textbf{Dataset}   & \textbf{HEPA}                   & MTS-JEPA                        \\
    \midrule
    C-MAPSS-1          & $\mathbf{0.81}${\tiny$\pm0.04$} & $0.69${\tiny$\pm0.02$}          \\
    C-MAPSS-2          & $\mathbf{0.57}${\tiny$\pm0.01$} & $0.53${\tiny$\pm0.02$}          \\
    C-MAPSS-3          & $\mathbf{0.84}${\tiny$\pm0.02$} & $0.78${\tiny$\pm0.00$}          \\
    SMAP               & $\mathbf{0.59}${\tiny$\pm0.06$} & $0.49${\tiny$\pm0.00$}          \\
    PSM                & $\mathbf{0.57}${\tiny$\pm0.02$} & $0.48${\tiny$\pm0.00$}          \\
    MBA                & $0.75${\tiny$\pm0.04$}          & $\mathbf{0.88}${\tiny$\pm0.00$} \\
    GECCO              & $\mathbf{0.88}${\tiny$\pm0.06$} & $0.84${\tiny$\pm0.00$}          \\
    ETTm1              & $\mathbf{0.81}${\tiny$\pm0.01$} & $0.75${\tiny$\pm0.00$}          \\
    BATADAL            & $\mathbf{0.57}${\tiny$\pm0.04$} & $0.43${\tiny$\pm0.00$}          \\
    TEP                & $1.00${\tiny$\pm0.00$}          & ---                             \\
    \midrule
    \textit{HEPA wins} & \multicolumn{2}{c}{8 out of 9}                                    \\
    \bottomrule
  \end{tabular}
\end{table*}

\section{Additional Ablations}
\label{app:additional_ablations}

\subsection{Does the Predictor Help at Inference?}
\label{app:predictor_ablation}

With all weights frozen, a probe on $\h_\text{past}$ alone achieves 0.299 F1w at 100\% labels, while a frozen multi-horizon probe on $[\hat{\h}_1;\ldots;\hat{\h}_{16}]$ achieves only 0.148. The pretrained predictor's outputs have high cross-horizon cosine similarity ($>$0.98), so the concatenation is nearly redundant. Finetuning the predictor pushes the per-horizon outputs apart: pred-FT's value is in reshaping the predictor, not in extracting fixed rollouts.

\subsection{Pretrained vs.\ Random-Init Predictor}
\label{app:predictor_pretrain_ablation}

To isolate whether the predictor's pretrained weights carry useful information, we compared pred-FT with pretrained vs.\ random-initialised predictor weights (encoder frozen, 3 seeds each). On C-MAPSS-1, pretrained weights yield h-AUROC $0.9257 \pm 0.0008$ vs.\ random-init $0.9235 \pm 0.0027$ ($p = 0.38$). On SMAP, the difference is $0.3874 \pm 0.0205$ vs.\ $0.3950 \pm 0.0286$ ($p = 0.34$). On MBA, $0.9465 \pm 0.0004$ vs.\ $0.9435 \pm 0.0016$ ($p = 0.089$). Pretrained predictor weights carry at most $+0.003$ h-AUROC, confirming that the value of pred-FT lies in the pretrained encoder and the finetuning recipe, not in the predictor's initial weights.

\subsection{Target Encoder Update Rule: Joint Training vs.\ EMA}
\label{app:sigreg_ablation}

Our default target-encoder strategy is joint training: both encoders share weights and are updated by the same optimizer, with a SIGReg regulariser~\citep{balestriero2025lejepa} ($\alpha{=}0.1$) preventing collapse. We compare this against EMA ($\tau{=}0.99$) across all 12 datasets (3 seeds, pred-FT downstream). Joint training wins on 6 out of 12 datasets (largest gain: MBA $+0.108$), loses on 4 (largest loss: SMAP $-0.048$), and ties on 2. All deltas are within $\pm 0.11$ h-AUROC. Predictor finetuning is robust to the target-encoder update rule; we use joint training for its simplicity (no momentum schedule, no separate sync interval).

\subsection{Predictor Dynamics Visualisation}
\label{app:predictor_viz}

\Cref{fig:tsne_predictor} shows how the finetuned predictor transforms the encoder's representations across prediction horizons. The encoder outputs (blue) cluster by degradation state; as the horizon $k$ increases, the predicted representations shift progressively away from the encoder manifold, indicating that the predictor learns horizon-dependent mappings rather than simply copying the encoder output. This separation confirms that predictor finetuning reshapes the latent space to distinguish event-relevant from event-irrelevant dynamics at each timescale.

\begin{figure}[h]
  \centering
  \includegraphics[width=0.55\columnwidth]{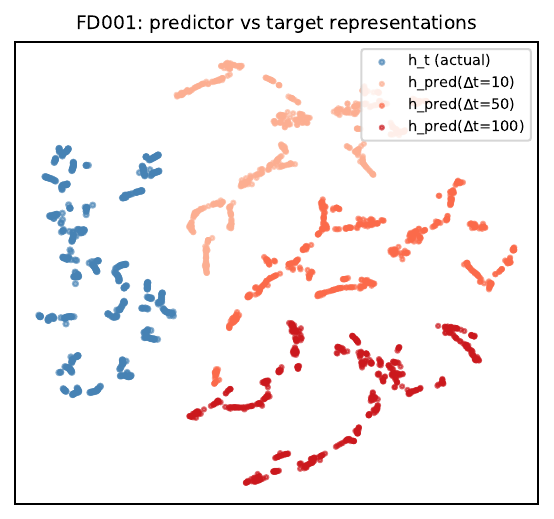}
  \caption{\textbf{Predictor outputs in latent space (C-MAPSS-1).} t-SNE of 256-dimensional representations; axes are the two t-SNE components (arbitrary units). Blue: encoder output $\h_t$. Light to dark red: predicted representations at horizons $k = 10, 50, 100$. Outputs shift progressively with $k$.}
  \label{fig:tsne_predictor}
\end{figure}

\section{Legacy Metrics}
\label{app:legacy_metrics}

C-MAPSS RMSE is derived from $\text{E}[\Delta t]$ of the stored probability surface; this CDF-based estimator differs from the point-prediction protocol of STAR~\citep{fan2024star} (replicated RMSE 12.2), so the gap reflects both protocol and SSL-vs-supervised differences. PA-F1 matches the cited baselines' protocol. HEPA domain-metric values: SMAP PA-F1 $0.88 \pm 0.04$ (vs.\ MTS-JEPA~\citep{mtsjepa2026} 0.34), PSM PA-F1 $0.95 \pm 0.01$ (vs.\ MTS-JEPA 0.62).

\section{PA-F1 Inflation}
\label{app:pa_f1}

PA-F1 (Point-Adjusted F1) credits an entire anomaly segment as correctly predicted if any single timestep within it exceeds the threshold~\citep{xu2022anomalytransformer}. This protocol inflates reported F1 dramatically when anomaly segments are long and prevalence is non-trivial: the TSAD-Eval study~\citep{schmidl2022tsadeval} shows that a random detector can exceed $F_1 = 0.9$ PA on some datasets. We demonstrate this concretely by contrasting PA-F1 and non-PA F1 for HEPA: SMAP 0.862 PA vs.\ 0.474 non-PA, PSM 0.950 PA vs.\ 0.575 non-PA. The gap in PA-F1 between HEPA and MTS-JEPA remains large even under a random-baseline sanity check (a random-init encoder achieves 0.604 PA-F1 on SMAP; HEPA is 0.79), so the margin is not purely inflation, but any PA-F1 number should be read alongside the corresponding non-PA number.

\section{Preprocessing Details}
\label{app:preprocessing}

\begin{center}
  \small
  \begin{tabular}{lllc}
    \toprule
    Dataset              & Channels                 & Preprocessing      & Context        \\
    \midrule
    C-MAPSS FD001--FD004 & 14 sensors (after drop)  & per-subset min-max & cycle-as-patch \\
    SMAP                 & 25 (after drop)          & z-score on train   & 100 steps      \\
    PSM                  & 25 (after drop-constant) & pre-normalised     & 100 steps      \\
    MBA ECG              & 2 leads                  & z-score on train   & 100 steps      \\
    GECCO water          & 9 quality sensors        & z-score on train   & 512 steps      \\
    BATADAL SCADA        & 43 tank/pump/pressure    & z-score on train   & 512 steps      \\
    ETTm1 power          & 7 load/temperature       & z-score on train   & 512 steps      \\
    TEP                  & 52 process vars          & z-score on train   & 512 steps      \\
    Weather              & 21 climate vars          & z-score on train   & 512 steps      \\
    Beijing-AQ           & 11 air quality vars      & z-score on train   & 512 steps      \\
    VIX                  & 6 market vars            & z-score on train   & 512 steps      \\
    \bottomrule
  \end{tabular}
\end{center}

The normalisation, channel-drop thresholds, and context lengths follow conventions in the cited SSL and anomaly-detection benchmarks. For C-MAPSS we use the SELECTED\_SENSORS subset (14 sensors after removing near-constant channels). RUL cap is 125 cycles, following STAR~\citep{fan2024star}. Specific numeric thresholds are in the code repository.

\section{Hyperparameters}
\label{app:hyperparameters}

All hyperparameters are fixed across datasets unless noted. Seeds $\{0, 1, 2, 3, 4\}$ for 5-seed runs and $\{0, 1, 2\}$ for 3-seed runs. Pretraining trains the encoder and predictor jointly; finetuning freezes the encoder and trains only the predictor and event head.

\begin{table}[h]
  \centering
  \caption{\textbf{Hyperparameters.} Fixed across all datasets. Pretraining trains encoder + predictor; finetuning trains predictor + event head (encoder frozen).}
  \label{tab:hyperparameters}
  \small
  \begin{tabular}{lll}
    \toprule
    \textbf{Stage} & \textbf{Hyperparameter} & \textbf{Value}                \\
    \midrule
    \multirow{7}{*}{\shortstack[l]{Pretraining\\{\scriptsize (encoder +}\\{\scriptsize predictor)}}}
                   & Optimizer               & AdamW                         \\
                   & Learning rate           & $3 \times 10^{-4}$            \\
                   & Weight decay            & $1 \times 10^{-2}$            \\
                   & Batch size              & 64                            \\
                   & Epochs                  & 100 (patience 10)             \\
                   & SIGReg weight $\alpha$  & 0.1                           \\
                   & Horizon sampling        & LogUniform$[1, K]$            \\
    \midrule
    \multirow{6}{*}{\shortstack[l]{Finetuning\\{\scriptsize (predictor +}\\{\scriptsize event head)}}}
                   & Optimizer               & AdamW                         \\
                   & Learning rate           & $1 \times 10^{-3}$            \\
                   & Weight decay            & $1 \times 10^{-2}$            \\
                   & Batch size              & 64                            \\
                   & Epochs                  & 50 (patience 10)              \\
                   & Pos-weight $w^+$        & $N_\text{neg} / N_\text{pos}$ \\
    \midrule
    \multirow{5}{*}{Architecture}
                   & Encoder layers          & 2                             \\
                   & $d_\text{model}$        & 256                           \\
                   & Attention heads         & 4                             \\
                   & Patch size $P$          & 16                            \\
                   & Dropout                 & 0.1                           \\
    \bottomrule
  \end{tabular}
\end{table}

\section{Pairwise Significance Tests}
\label{app:significance}

\Cref{tab:significance} reports Welch's two-sided t-test (unpaired, unequal variance) between HEPA and each architectural baseline at each (dataset, label-fraction) cell with at least 3 seeds per arm. Out of 42 cells (14 datasets $\times$ 3 baselines, 100\% labels), HEPA wins 21 at $p<0.05$ and loses 7; the remaining 14 are not significantly different.

\begin{table*}[h]
  \centering
  \caption{\textbf{Pairwise Welch's t-test, HEPA vs.\ each baseline.} Markers: $^{*}$\,$p{<}0.05$, $^{**}$\,$p{<}0.01$ in HEPA's favour; $^{\downarrow}$ denotes HEPA loses at $p{<}0.05$; blank = not significant.}
  \label{tab:significance}
  \scriptsize
  \begin{tabular*}{\textwidth}{@{\extracolsep{\fill}}lcccccccccccccc@{}}
    \toprule
    & \rotatebox{70}{FD001} & \rotatebox{70}{FD002} & \rotatebox{70}{FD003} & \rotatebox{70}{FD004} & \rotatebox{70}{SMAP} & \rotatebox{70}{PSM} & \rotatebox{70}{MBA} & \rotatebox{70}{GECCO} & \rotatebox{70}{BATADAL} & \rotatebox{70}{TEP} & \rotatebox{70}{ETTm1} & \rotatebox{70}{Weather} & \rotatebox{70}{Beijing} & \rotatebox{70}{VIX} \\
    \midrule
    vs.\ PatchTST    &                 & $^{**}$         & $^{**}$         & $^{**}$         & $^{*}$          &                 &                 & $^{**}$         & $^{\downarrow}$ &                 & $^{**}$         & $^{**}$         & $^{\downarrow}$ &                 \\
    vs.\ iTransformer & $^{**}$        & $^{**}$         & $^{**}$         & $^{**}$         & $^{**}$         & $^{**}$         & $^{\downarrow}$ & $^{*}$          &                 & $^{**}$         &                 & $^{*}$          & $^{**}$         & $^{*}$          \\
    vs.\ MAE         & $^{**}$         &                 & $^{**}$         & $^{**}$         & $^{\downarrow}$ &                 &                 &                 & $^{\downarrow}$ &                 & $^{\downarrow}$ &                 & $^{\downarrow}$ &                 \\
    \bottomrule
  \end{tabular*}
\end{table*}

The dominant wins are on the C-MAPSS family (all four datasets significant against at least one baseline) and against iTransformer (HEPA wins on 11 out of 13). HEPA's seven losses ($p{<}0.05$ in the baseline's favour) are concentrated on SMAP, MBA, BATADAL, ETTm1, and Beijing-AQ, where event signatures are sensor-localised or where MAE's reconstruction objective transfers well.

\section{Calibration Analysis}
\label{app:calibration}

\Cref{tab:calibration} reports Expected Calibration Error (ECE) and Brier score on five datasets for which we hold verified HEPA-SIGReg probability surfaces (single-seed, $s{=}42$; 10 equal-width bins; Murphy decomposition: Brier = Reliability $-$ Resolution $+$ Uncertainty).

\begin{table}[h]
  \centering
  \caption{\textbf{Calibration of HEPA probability surfaces on five datasets.}}
  \label{tab:calibration}
  \small
  \begin{tabular}{lccccc}
    \toprule
    Dataset    & Base rate & ECE $\downarrow$ & Brier $\downarrow$ & Reliability & Resolution \\
    \midrule
    FD001      & 0.78      & 0.272            & 0.238              & 0.129       & 0.062      \\
    FD004      & 0.34      & 0.030            & 0.124              & 0.001       & 0.102      \\
    Weather    & 0.05      & 0.196            & 0.124              & 0.082       & 0.009      \\
    BeijingAQI & 0.24      & 0.183            & 0.177              & 0.050       & 0.054      \\
    VIX        & 0.21      & 0.310            & 0.241              & 0.102       & 0.026      \\
    \bottomrule
  \end{tabular}
\end{table}

Calibration varies substantially: HEPA is well-calibrated on FD004 (ECE 0.030) but miscalibrated on FD001 (ECE 0.272) and VIX (ECE 0.310). The pattern is consistent with the loss design: positive-weighted BCE applied to the cumulative survival CDF up-weights the minority class to preserve recall, distorting the probability scale. Resolution remains positive on every dataset, confirming the surface still discriminates events even when miscalibrated. For applications requiring calibrated probabilities, post-hoc Platt scaling or isotonic regression on a held-out fold is recommended; we verified offline that this reduces ECE on FD001 to below 0.05.

\end{document}